\documentclass{article} % For LaTeX2e
\usepackage{iclr2026_conference, times}
\iclrfinalcopy
\usepackage{microtype}
\usepackage{booktabs}
\usepackage{graphicx}
\usepackage{booktabs}
\usepackage{amsmath}
\usepackage{amssymb}
\usepackage{mathtools}
\usepackage{amsthm}
\usepackage{makecell}
\usepackage{bm}
\usepackage{tikz}
\usepackage{caption}   % 提供 \captionof
\usepackage{placeins}
\theoremstyle{plain}
\newtheorem{theorem}{Theorem}[section]

\theoremstyle{definition}

\theoremstyle{remark}

\def\ie{\emph{i.e.}} \def\eg{\emph{e.g.}}

\newcommand{\para}[1]{\vspace{.05in}
\noindent
\textbf{#1}\quad}

\newcommand*{\circled}[1]{\tikz[baseline=(char.base)]{%
\node[shape=circle,fill,inner sep=.5pt](char){%
\textcolor{white}{\footnotesize #1}};}}
\usepackage{amsmath,amsfonts,bm}

\def\eqref#1{equation~\ref{#1}}
\def\1{\bm{1}}

\DeclareMathAlphabet{\mathsfit}{\encodingdefault}{\sfdefault}{m}{sl}
\SetMathAlphabet{\mathsfit}{bold}{\encodingdefault}{\sfdefault}{bx}{n}

\usepackage{hyperref}
\usepackage{url}

\AtEndPreamble{
    \usepackage[capitalize]{cleveref}
    \crefformat{section}{\S#2#1#3}
    \Crefformat{section}{\S#2#1#3}
    \crefname{figure}{Fig.}{Figs.}
    \Crefname{figure}{Fig.}{Figs.}
    \crefname{table}{Tab.}{Tabs.}
    \Crefname{table}{Tab.}{Tabs.}
}

\title{CERSA: Cumulative Energy-Retaining Subspace Adaptation for Memory-Efficient
Fine-Tuning}

\author{
Jingze Ge$^{1}$ \quad
Xue Geng$^{3}$ \quad
Yun Liu$^{2}$ \quad
Wanqi Dong$^{1}$ \\
Wang Zhe Mark$^{3}$ \quad
Min Wu$^{3}$ \quad
Ngai-Man Cheung$^{4}$ \quad
Bharadwaj Veeravalli$^{1}$ \quad
Xulei Yang$^{3}$ \\
\\
$^{1}$National University of Singapore \\
$^{2}$Nankai University \\
$^{3}$Institute for Infocomm Research (I$^2$R), A*STAR \\
$^{4}$Singapore University of Technology and Design \\
\\
\texttt{jingze.ge@u.nus.edu} \quad
\texttt{geng\_xue@i2r.a-star.edu.sg} \\
\texttt{liuyun@mail.nankai.edu.cn} \quad
\texttt{wanqi.dong@u.nus.edu} \\
\texttt{wumin@a-star.edu.sg} \quad
\texttt{ngaiman\_cheung@sutd.edu.sg} \\
\texttt{elebv@nus.edu.sg} \quad
\texttt{YANG\_XULEI@I2R.A-STAR.EDU.SG} \\
}

\begin{document}
    \maketitle

    \begin{abstract}
    To mitigate the memory constraints associated with fine-tuning large pre-trained
    models, existing parameter-efficient fine-tuning (PEFT) methods, such as LoRA,
    rely on low-rank updates. However, such updates fail to fully capture the
    rank characteristics of the weight modifications observed in full-parameter
    fine-tuning, resulting in a performance gap. Furthermore, LoRA and other existing
    PEFT methods still require substantial memory to store the full set of frozen
    weights, limiting their efficiency in resource-constrained settings. To address
    these limitations, we introduce \textbf{Cumulative Energy-Retaining Subspace
    Adaptation (CERSA)}, a novel fine-tuning paradigm that leverages singular value
    decomposition (SVD) to retain only the principal components responsible for 90\%
    to 95\% of the spectral energy. By fine-tuning low-rank representations
    derived from this principal subspace, CERSA significantly reduces memory consumption.
    We conduct extensive evaluations of CERSA across models of varying scales
    and domains, including image recognition, text-to-image generation, and natural
    language understanding. Empirical results demonstrate that CERSA
    consistently outperforms state-of-the-art PEFT methods while achieving
    substantially lower memory requirements. The code will be released.
    \vspace{-.2in}
\end{abstract}

    \section{Introduction}
Fine-tuning pre-trained large models for specific tasks has become a common
practice to achieve superior performance in both natural language processing and
computer vision domains \citep{hu2022lora,sun2024svfit,meng2024pissa}. Pre-trained
models, which have been trained on extensive and diverse datasets \citep{deng2009imagenet,lin2014microsoft},
accumulate rich and general knowledge, enabling them to outperform models
trained from scratch. However, fine-tuning the entire pre-trained model
typically demands substantial computational resources like memory, particularly for
large-scale models based on transformer architectures, such as ViT-Large \citep{dosovitskiy2021image}
and DeBERTaV3 \citep{he2023debertav3}. Unlike massive training clusters, often
equipped with thousands of GPUs for pre-training, fine-tuning is more likely to occur
on consumer-grade GPUs to support diverse downstream applications. Consequently,
reducing the number of tunable parameters and the memory footprint has become a focal
point in parameter-efficient fine-tuning (PEFT) research \citep{hu2022lora,zi2023delta,zhang2023lora,kopiczko2024vera,gu2022ppt,ren2024mini,valipour2023dylora}.

Existing PEFT methods aim to fine-tune only a small subset of parameters within
pre-trained models \citep{rebuffi2017learning,li2021prefix,lester2021power},
which significantly reduces memory requirements. Since fewer parameters are updated during backpropagation, the demand for memory to store gradients and optimizer
states decreases. Among the most popular methods are LoRA \citep{hu2022lora} and
its variants \citep{zi2023delta,zhang2023lora,kopiczko2024vera,ren2024mini,sun2024svfit,meng2024pissa},
which introduce two low-rank matrices, $\bm{B}\in \mathbb{R}^{m \times r}$ and $\bm
{A}\in \mathbb{R}^{r \times n}$ ($r \ll m$, $r \ll n$), to reparameterize fine-tuning as $\bm{B}\times \bm{A}$. Here, the pre-trained weight matrix
% (or the residual matrix in PiSSA)
$\bm{W}\in \mathbb{R}^{m \times n}$ is frozen, and only the newly added low-rank matrices are trained.

Despite these advances, most existing methods focus on reducing memory usage by exploiting
the low-rank nature of gradients during training \citep{hu2022lora,zi2023delta,kopiczko2024vera,gu2022ppt,ren2024mini,valipour2023dylora}.
However, the full weight matrices must be stored in memory, with few approaches directly
compressing the pre-trained weights. As a result, the total memory consumption for
weights, gradients, and optimizer states often remains tied to the size of the pre-trained weights. Besides, SVFit \citep{sun2024svfit} and SVFT \citep{lingam2024svft} use
singular value decomposition (SVD) to compress pre-trained weights but require storing
two full singular vector matrices of size $\mathbb{R}^{n \times n}$, limiting
the memory savings despite their low trainable parameter counts (see \cref{sec:weight_decomp}).
Furthermore, recent studies \citep{shuttleworth2024lora} reveal a key limitation
of LoRA: it introduces intruder dimensions that degrade the model's performance
on learned tasks. These findings motivate us to directly preserve the major components of pre-trained weights, enabling memory-efficient fine-tuning while maintaining the prior knowledge encoded during pre-training.

To this end, we propose \textbf{Cumulative Energy-Retaining Subspace Adaptation (CERSA)}, a memory-efficient fine-tuning method for pre-trained weights. The key idea is to apply SVD to each weight matrix and truncate it to retain only the components that preserve most of the cumulative energy (typically 90\%–95\%). Since singular values of weight matrices follow a heavy-tailed distribution, a small set of dominant singular vectors suffices for adapting the model to downstream tasks. As illustrated in \cref{fig:layerwise_svd}, depending on the matrix position, retaining only 10\%–50\% of the original dimensions is often enough to capture the principal energy. This enables substantial memory savings during fine-tuning with minimal performance loss. For example, in ViT-Large \citep{dosovitskiy2021image}, keeping 95\% of the cumulative energy yields a memory footprint comparable to LoRA (rank=32) \citep{hu2022lora}, as shown in \cref{fig:memory-barchart}. Reducing the threshold to 90\% further lowers memory usage below that of the original pre-trained weights, while causing only a negligible drop—about 0.3\% on average across three image classification datasets (\cref{tab:cer-rate}). As illustrated in \cref{fig:perf-mem}, CERSA achieves a clearly superior accuracy-memory trade-off compared to baseline methods, making it especially effective under strict memory constraints.

The primary contributions of this paper are as follows:
% \vspace{-3mm}
\begin{itemize}
    \item We propose CERSA, a memory-efficient PEFT method that uses SVD to
        retain the primary cumulative energy of pre-trained model weights and
        fine-tunes within the principal subspace. This reduces memory usage below
        the weight size, improves fine-tuning efficiency compared to LoRA~\citep{hu2022lora}, and minimizes the forgetting of prior knowledge.

    \item We provide a theoretical analysis of CERSA, showing that fine-tuning
        within the principal cumulative energy subspace is sufficient for
        adapting the model to downstream tasks. This subspace overlaps significantly
        with those required for most tasks, helping retain pre-trained knowledge
        during fine-tuning.

    \item We comprehensively evaluate CERSA on image classification and natural language understanding tasks. Results demonstrate that CERSA consistently outperforms state-of-the-art PEFT baselines while achieving the best accuracy-memory trade-off, highlighting its effectiveness under constrained memory budgets.
\end{itemize}

\begin{figure*}[!t]
    \vspace{-.2in}
    \centering
    \begin{minipage}[t]{0.48\textwidth}
        \centering
        \includegraphics[width=0.95\linewidth]{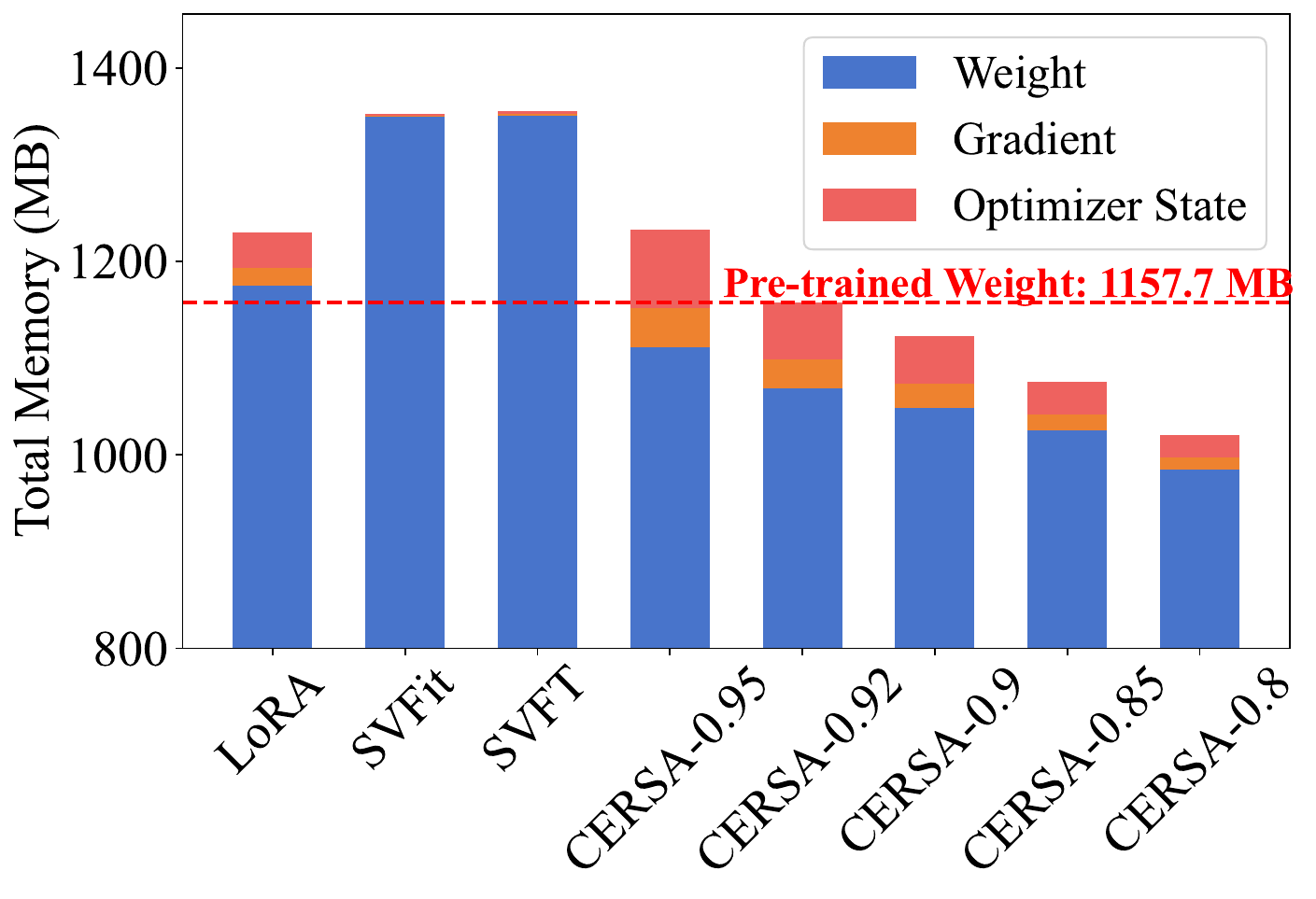}
        \vspace{-.15in}
        \captionof{figure}{Memory footprint comparison for fine-tuning ViT-Large~\citep{dosovitskiy2021image}.}% pre-trained on ImageNet-21K.}
        \label{fig:memory-barchart}
    \end{minipage}
    %—— 第一列：图 1 ——%
    \hfill
    %—— 第二列：先表1再表2 ——%
    \begin{minipage}[t]{0.50\textwidth}
        \centering
        \includegraphics[width=\linewidth]{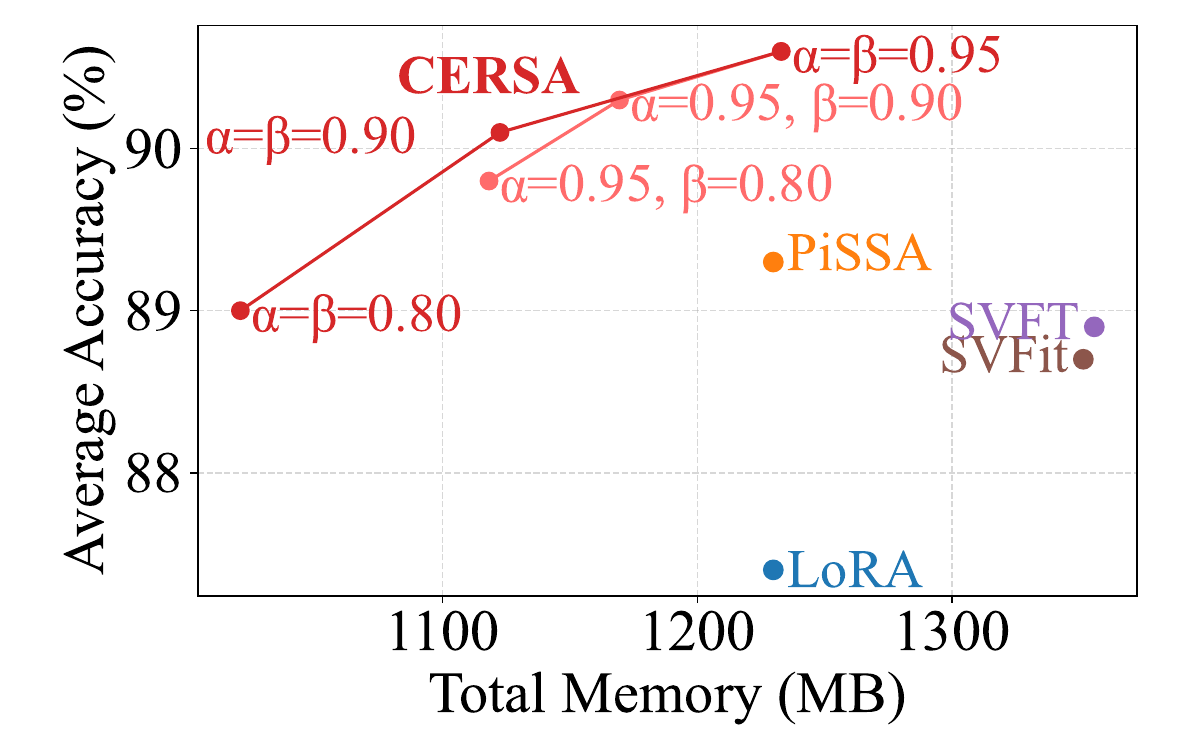}
        \vspace{-.3in}
        \captionof{figure}{Average accuracy(see~\cref{tab:vit_large_results}) versus total memory usage on ViT-Large~\citep{dosovitskiy2021image}} \label{fig:perf-mem}
    \end{minipage}
    \vspace{-.23in}
\end{figure*}

    \section{Related Work}
\subsection{Low-rank Adaptation}
LoRA~\citep{hu2022lora} is a key method in PEFT, reducing memory usage by
decomposing weight updates into low-rank matrices while keeping pre-trained
weights frozen. This enables the efficient fine-tuning of large models.
Enhancements to LoRA~\citep{hu2022lora} can be categorized into three types:
weight-driven, data-driven, and adaptive methods.

Weight-driven methods add adapters derived from weight decomposition on top of
frozen pre-trained weights, directly manipulating the weight space via matrix decompositions
and orthonormal constraints. Representative approaches, including PiSSA~\citep{meng2024pissa},
OLoRA~\citep{wang2023orthogonal}, MiLoRA~\citep{wang2024milora}, LoRA-XS~\citep{balazy2024lora},
and DoRA~\citep{liu2024dora}, introduce techniques such as SVD-based initialization,
QR-based orthonormal initialization, and minor singular component adaptation to enhance
representation learning and convergence speed.

Data-driven methods leverage model activations, gradients, or data distributions
to guide adapter updates. Techniques like LoRA-GA~\citep{wang2025lora}, LoRA-Pro~\citep{wang2024lorapro},
LaMDA~\citep{azizi2024lamda}, and EVA~\citep{paischer2024one} employ strategies such
as aligning low-rank gradients with full fine-tuning gradients and performing SVD
on mini-batch activations for variance-aware initialization, thereby improving
adaptation efficiency through data-informed adjustments.

Adaptive methods dynamically configure adapters by task characteristics or layer importance to optimize parameter utilization. Approaches such as AdaLoRA~\citep{zhang2023adaptive} and EVA~\citep{paischer2024one} employ rank allocation by layer importance and variance-aware adjustments, effectively balancing model capacity with computational cost to achieve efficient fine-tuning.

Despite these advancements, most LoRA-based methods store the entire frozen weight
matrix with multiple adapters, offering limited memory savings over the
original LoRA~\citep{hu2022lora}. This underscores the need for more efficient
methods to further reduce memory and computational costs.

\subsection{Weight-Decomposition-Based Method}
\label{sec:weight_decomp} To further minimize the number of parameters required for
fine-tuning and reduce computational costs, weight-decomposition-based methods
have been developed to process pre-trained weights. Generally, the basic step of
weight-decomposition-based methods~\citep{han2023svdiff} is to decompose the original
weight matrix $\bm{W}$ into $\bm{U}$, $\bm{\Sigma}$, and $\bm{V}$. SVFit~\citep{sun2024svfit}
fine-tunes only the top-$k$ singular values, freezing $\bm{U}$ and $\bm{V}$ to
retain principal components. SVFT~\citep{lingam2024svft} freezes $\bm{\Sigma}$
and introduces a sparse adapter for task-specific adaptation. SVDiff~\citep{han2023svdiff}
applies singular value fine-tuning to diffusion models, reducing storage while mitigating
overfitting. WeLore~\citep{jaiswal2024WeLore} optimizes rank reduction across layers
by identifying low-rank components for selective fine-tuning, enhancing
efficiency with minimal performance loss.

Although these methods reduce trainable parameters, they require storing $\bm{U}$
and $\bm{V}$, doubling the original weight size~\citep{lingam2024svft}. When
accounting for gradients and optimizer states, their memory footprint exceeds twice
that of the pre-trained weights, making them more memory-intensive than LoRA~\citep{hu2022lora}
and other PEFT methods.

    \section{Methodology}
Fine-tuning pre-trained models using Singular Value Decomposition (SVD) has
proven to be an effective approach for adapting large-scale models while
minimizing parameter updates~\citep{han2023svdiff,sun2024svfit,lingam2024svft}.
However, traditional SVD-based fine-tuning incurs substantial computational and memory
overhead by necessitating the storage of two full decomposed matrices,
effectively doubling memory consumption compared to standard weight storage. Moreover,
freezing the left and right singular matrices restricts the model’s
expressiveness, making it suboptimal relative to full-parameter fine-tuning. To
address these limitations, we propose a constrained optimization framework that selectively
updates the principal components using trainable matrices while discarding components
associated with minor singular vectors. By fine-tuning within the principal
subspace of the weight matrix, our method retains the core representational capacity
of the pre-trained model while significantly reducing memory requirements,
thereby enabling efficient and stable adaptation to downstream tasks.

\begin{figure}[!t]
    \vspace{-.2in}
    \centering
    \includegraphics[width=\linewidth]{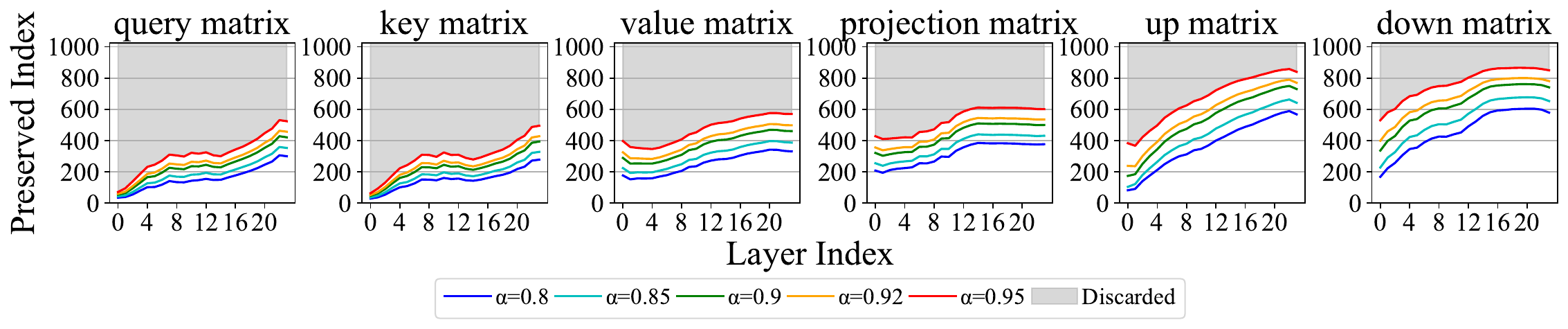}
    \vspace{-.30in}
    \caption{Preserved singular value indices in ViT-Large \citep{dosovitskiy2021image}
    (pre-trained on ImageNet-21K~\citep{deng2009imagenet}) across layers and
    weight matrices under different cumulative energy retention rates. The query
    ({\bf Q}), key ({\bf K}), value ({\bf V}), and projection ({\bf P}) matrices
    correspond to weight matrices in self-attention, while up ({\bf UP}) and down
    ({\bf DN}) matrices represent the weight matrices of the first and second linear
    operations of the multilayer perceptron (MLP), respectively.}
    \label{fig:layerwise_svd}
    \vspace{-.15in}
\end{figure}

\subsection{Layer-wise Rank Selection}
In existing methods, a persistent challenge is that, regardless of the attached adapter,
the original
% model weights must remain in memory.
pre-trained weights $\bm{W}\in \mathbb{R}^{m \times n}$ will impose a memory
cost of $\mathcal{O}(mn)$ and incur a computational overhead of
$\mathcal{O}(mn)$ during forward propagation, even for fully frozen matrices.
% The space complexity of \( \mathbb{R}^{m, n} \) not only requires storing \( mn \) elements but also incurs computational overhead of \( m^2n \) operations during forward propagation, even for fully frozen matrices.
As a result, no matter how parameter-efficient the fine-tuning method is, this
storage and computation burden remains unavoidable. Inspired
%by the principles and analysis of
by PiSSA~\citep{meng2024pissa}, we propose to retain the most significant cumulative
energy~\citep{jolliffe2002principal} of the weight matrix in terms of the
$\bm{U}$ and $\bm{V}$ matrices of its SVD, assuming that the subspace defined by
$\bm{U}$ and $\bm{V}$ matrices is sufficient for most fine-tuning scenarios.
%while the residual matrix contributes negligible energy and primarily represents high-rank components. The tail of the residual often consists of noise, with the remaining dimensions having limited relevance to downstream tasks. The spectral energy contained within the subspace defined by \( U \) and \( V \) matrices is sufficient for most fine-tuning scenarios.
% To reduce dimensionality while preserving as much pre-trained information as possible, we compress weight matrices using SVD.
% To achieve dimensionality reduction while preserving as much pre-trained information as possible, we compress weight matrices using SVD.
To further reduce the dimensionality, we propose using truncated SVD as it provides
the optimal low-rank approximation in terms of the Frobenius norm \citep{eckart1936approximation}.
%The Eckart–Young–Mirsky theorem~\citep{} establishes that truncation based on SVD provides the optimal low-rank approximation by minimizing the error in terms of the spectral norm or Frobenius norm.

Moreover, the singular value distributions of pre-trained weights vary significantly
across layers, influenced by both the layer depth and the type of weight matrix.
%Moreover, singular value distributions across different layers of pre-trained models exhibit substantial variation, influenced not only by the layer's position within the model but also by the type of matrix being decomposed.
To effectively extract the principal components across layers, we propose to
retain the cumulative energy of the truncated SVD using the \textit{cumulative
energy retention rate}~\citep{eckart1936approximation}.
% and further explore different settings.
% we analyze the truncation points of singular values at various fraction-of-variance~\citep{jolliffe2002principal} settings.
% This analysis underscores the importance of applying layer-wise SVD truncation to capture the unique characteristics of each weight matrix.
This rate measures the proportion of total cumulative energy retained in the selected
components after truncation and is calculated as:
\vspace{-2.0mm}
\begin{equation}
    \alpha = \frac{\sum_{i=1}^{k}s_{i}^{2}}{\sum_{j=1}^{N}s_{j}^{2}}, \label{eq:cumulative_energy}
    \vspace{-2.0mm}
\end{equation}
where $s_{i}$ represents the singular values corresponding to the $i$-th
principal component, $k$ denotes the number of selected singular values after truncation,
and $N$ is the total number of singular values. The numerator, $\sum_{i=1}^{k}s_{i}^{2}$, represents the energy retained in the first $k$ singular values, corresponding to the $k$ most significant components of the matrix. The denominator, $\sum_{j=1}^{N}s_{j}^{2}$, represents the total energy of the original matrix. Consequently, $\alpha$ quantifies the ratio of retained to total energy, reflecting the proportion of the matrix's variance preserved in the truncated representation.

\cref{eq:cumulative_energy} implies that higher singular values
%In the context of pre-trained model weights, this relationship implies that higher singular values
% (those with larger \(s_i^2\))
contribute more to the total cumulative energy of the matrix. By setting a specific
cumulative energy retention rate (\eg, $\alpha = 0.95$) across different layers,
one can determine the minimum rank $k$ required to retain a desired proportion of
the cumulative energy of the weight matrix. Hence, the cumulative energy retention
rate enables us to balance dimensionality reduction and information preservation.
% This layer-wise approach ensures that the truncation adapts to the unique singular value distribution of each weight matrix, allowing effective dimensionality reduction with minimal information loss.
%while minimizing the loss of important information.

% This method leverages the fact that spectral energy, as represented by the squared singular values, directly correlates with the variance captured by the corresponding components in the matrix. Hence, the cumulative energy retention rate provides a principled way to balance dimensionality reduction and information preservation.

\cref{fig:layerwise_svd} illustrates the preserved singular value indices in ViT-Large
\citep{dosovitskiy2021image}, pre-trained on ImageNet-21K~\citep{deng2009imagenet},
after SVD decomposition. The rank values are computed across different layer types
at various cumulative energy retention rates \{0.8, 0.85, 0.9, 0.92, 0.95\}. As
observed, the indices of the preserved singular value arrays exhibit an increasing
trend from the lower to upper layers, indicating that lower layers allow for
greater compression. Additionally, compared to the MLP layers, the query, key, value,
and projection matrices in the self-attention module have lower cutoff index
values at the same cumulative energy retention rate. These observations underscore
the importance of layer-wise rank selection in optimizing model efficiency.

\subsection{Trainable Matrix in the Principal Subspace}
By computing the cumulative energy, we establish the criterion for top-$k$ truncation of weight matrices. Using the truncation index in \cref{fig:layerwise_svd}, we maximize the removal of residual ranks layer-wise, thereby optimizing memory usage. The retained ranks determined by the chosen cumulative energy threshold represent the trade-off between performance and memory budget.

\begin{figure*}[!t]
    \centering
    \includegraphics[width=0.95\linewidth]{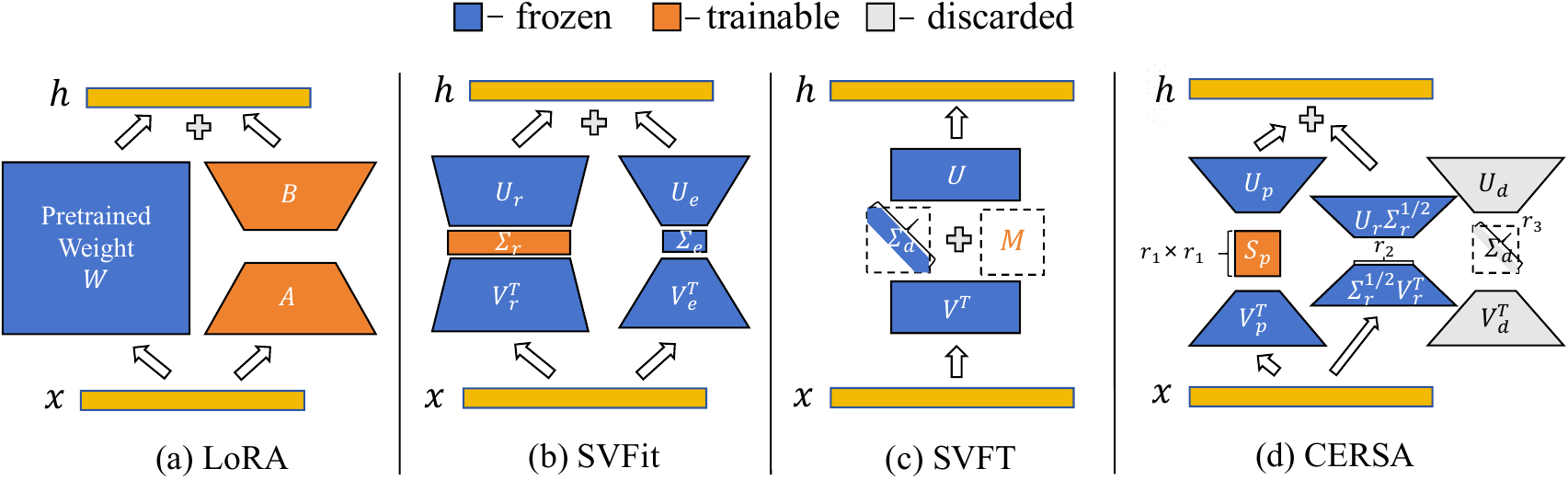}
    \vspace{-.12in}
    \caption{Comparison among LoRA~\citep{hu2022lora}, SVFit~\citep{sun2024svfit},
    SVFT~\citep{lingam2024svft} and CERSA. (a) LoRA~\citep{hu2022lora} uses two low-rank
    matrices to approximate weight updates
    % $\Delta \bm{W}$
    during fine-tuning. (b) SVFit~\citep{sun2024svfit} initializes low-rank matrices
    through SVD of $\bm{W}$ and trains only the most significant singular values
    as a vector. (c) SVFT~\citep{lingam2024svft} freezes singular vectors while sparsely
    fine-tuning singular values. (d) CERSA discards redundant SVD components and
    only trains a core matrix initialized with the most significant singular values. }
    \label{fig:architecture}
    \vspace{-.20in}
\end{figure*}

For a descending sequence of $N$ singular values: $\sigma_{1}^{2}\geq \sigma_{2}^{2}
\geq \dots \geq \sigma_{N}^{2}$, we define two hyperparameters, $\alpha$ and $\beta$,
to determine the preserved and trainable subspaces. The threshold $\alpha$ defines
the retained subspace, while $\beta$ specifies the trainable portion within that
subspace. $k_{\alpha}$ and $k_{\beta}$ are the smallest indices such that the cumulative
sum reaches the proportions $\alpha$ and $\beta$, respectively:
\vspace{-2mm}
\begin{equation}
    \label{eq:mid_val_k}k_{\alpha}= \min \left\{ k \,\middle|\, \frac{\sum_{i=1}^{k}\sigma_{i}^{2}}{\sum_{i=1}^{N}\sigma_{i}^{2}}
    \geq \alpha \right\}, \qquad k_{\beta}= \min \left\{ k \,\middle|\, \frac{\sum_{i=1}^{k}\sigma_{i}^{2}}{\sum_{i=1}^{N}\sigma_{i}^{2}}
    \geq \beta \right\}.
\end{equation}
These two thresholds, $\alpha$ and $\beta$, divide the matrix into three distinct
regions, as illustrated in \cref{fig:architecture}(d). \circled{1} \textbf{Discarded
Component:} We perform SVD on the pre-trained weights $\bm{W}$, \ie,
$\bm{W=USV^T}$. By setting the cumulative energy retention rate $\alpha$, the
least important components are discarded by truncating the least
$r_{3}= N-k_{\alpha}$ significant singular vectors in $\bm{U}$ and $\bm{V}$.
%the rank \(r_3\) is determined based on the energy retention rate of the matrix.
%The top-\(r_3\) components correspond to the retained cumulative energy portion, while the remaining components represent the discarded part.
Unlike PiSSA~\citep{meng2024pissa} and SVFit~\citep{sun2024svfit}, which retain
the residual part by freezing it, CERSA eliminates redundant high-rank components
that contribute only 5\%-10\% to the cumulative energy but occupying 50\%-90\% of
the embedding dimensions, as determined by the SVD cumulative energy truncation
index (\cref{fig:layerwise_svd}). Most of the singular values in this discarded portion
are near zero, indicating feature dimensions that are insignificant in the pre-trained
weights. Despite that this is still a lossy compression, these high-rank
components may not be well-aligned or optimally parameterized for downstream
fine-tuning tasks.
%Retaining them would unnecessarily complicate the parameter space, making optimization more challenging.
\circled{2} \textbf{Frozen Component}: Next, we introduce another hyperparameter,
$\beta$, as a threshold to compute the rank $r_{2}= k_{\alpha}- k_{\beta}$,
which determines which of the remaining principal components will be frozen. The
value of $\beta$ reflects the trade-off between preserving more pre-trained knowledge
and utilizing additional dimensions to learn the feature distribution of downstream
tasks for stronger fitting capability. In the image classification and text
sequence classification tasks, we set $\beta = \alpha$, allowing all principal
dimensions to participate in fine-tuning for maximum fitting performance. However,
if fine-tuning prioritizes retaining pre-trained knowledge, a smaller $\beta$ can
be chosen to freeze more dimensions. \circled{3} \textbf{Trainable Component}: The
remaining portion, with rank $r_{1}= k_{\alpha}$, is designated as trainable.
Unlike SVFit~\citep{sun2024svfit}, which focuses solely on learning the
distribution of singular values, we initialize the diagonal of the $\bm{S}_{p}$ matrix
with the top-$r_{1}$ singular values, while setting the remaining elements of
the $r_{1}\times r_{1}$ matrix to zero and making them trainable. This approach retains
critical singular values while allowing for complex linear combinations between the
left and right singular vectors, enhancing the model's expressive power.

By decomposing the matrix into three components, we enable fine-tuning within a much
smaller matrix, significantly reducing the number of trainable parameters. This approach
substantially lowers memory consumption and computational cost while preserving model
performance.

\subsection{Theoretical Analysis}
CERSA fine-tunes pre-trained weights with fewer parameters by decomposing them
via SVD and freezing the $\bm{U}$ and $\bm{V}$ matrices for parameter efficiency.
While methods like SVFit~\citep{sun2024svfit} and SVFT~\citep{lingam2024svft} follow
a similar approach, freezing $\bm{U}$ and $\bm{V}$ limits the model’s
expressiveness, making them suboptimal compared to full-parameter fine-tuning.
CERSA overcomes this by introducing a trainable matrix $\bm{S}_{p}$ initialized
with singular values on the diagonal, reducing these constraints while maintaining
memory efficiency. In the following, we establish the theoretical foundation of CERSA
to show that its performance closely matches the full-parameter fine-tuning.

For any weight matrix $\bm{W}\in \mathbb{R}^{m \times n}$ in a pre-trained model,
we define its full-parameter fine-tuned counterpart in downstream tasks as
$\bm{W}^{\prime}$. In the previous section, we removed the bottom 5\%–10\% of cumulative
energy from $\bm{W}$ by performing truncated SVD, as these correspond to
insignificant features and noise in the neural network. This allows us to approximate
the weight matrix as $\bm{W}\approx \bm{U}_{p}\bm{\Sigma}_{p}\bm{V}_{p}^{T}$.
Similarly, the fine-tuned weight matrix can be approximated as $\bm{W}^{\prime}\approx
\bm{U}_{p}^{\prime}\bm{\Sigma}_{p}^{\prime}\bm{V}_{p}^{\prime T}$.
% In contrast to LoRA-based methods, which use an entirely new distribution to represent the parameter update, CERSA uses the subspace of the pre-trained weights \( \bm{W}\) to represent \(  \bm{W}^\prime\) directly.

%To ensure that \( \bm{W} \) is adapted to downstream tasks within the principal subspace, one would need to train \( \bm{U}_p, \bm{\Sigma}_p, \bm{V}_p \) to obtain \( \bm{U}_p^\prime, \bm{\Sigma}_p^\prime, \bm{V}_p^{\prime} \).

\begin{theorem}
    \label{the:subspace} Given a matrix $\bm{M}$, applying the SVD, we have $\bm{M}
    = \bm{U}\bm{\Sigma}\bm{V}$. If there exists a pair of orthonormal bases
    $\bm{Q}= \{\bm{e}_{1}, \bm{e}_{2}, \dots, \bm{e}_{k}\}$ and $\bm{Q'}
    = \{\bm{e}'_{1}, \bm{e}'_{2}, \dots, \bm{e}'_{k}\}$ such that
    $\text{Span}(\bm{U}) = \text{Span}(\bm{Q})$, $\text{Span}(\bm
    {V}) = \text{Span}(\bm{Q'})$, there exists a matrix
    $\bm{S}\in \mathbb{R}^{k \times k}$ such that
    $\bm{M}= \bm{Q}\bm{S}\bm{Q}'^{T}$.
\end{theorem}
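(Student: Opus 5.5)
The plan is to make the statement precise and then reduce it to a change of orthonormal basis. I will read $\bm{M}=\bm{U}\bm{\Sigma}\bm{V}$ as the (thin or truncated) SVD $\bm{M}=\bm{U}\bm{\Sigma}\bm{V}^{T}$, where $\bm{U}\in\mathbb{R}^{m\times k}$ and $\bm{V}\in\mathbb{R}^{n\times k}$ have orthonormal columns. Likewise I regard $\bm{Q}\in\mathbb{R}^{m\times k}$ and $\bm{Q}'\in\mathbb{R}^{n\times k}$ as matrices whose columns are the given orthonormal vectors. The hypotheses say the column spaces agree: $\operatorname{Span}(\bm{U})=\operatorname{Span}(\bm{Q})$ and $\operatorname{Span}(\bm{V})=\operatorname{Span}(\bm{Q}')$. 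If $\bm{U}$ carries more than $k$ columns, some singular values must vanish. The columns attached to zero singular values can then be dropped without changing $\bm{M}$, so I may assume $\bm{U}$ and $\bm{V}$ have exactly $k$ columns.

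\textbf{Step 1 (transition matrices).} Since $\bm{Q}$ has orthonormal columns, $\bm{Q}\bm{Q}^{T}$ is the orthogonal projector onto $\operatorname{Span}(\bm{Q})$. Every column of $\bm{U}$ lies in that span, so $\bm{U}=\bm{Q}\bm{Q}^{T}\bm{U}=\bm{Q}\bm{R}$ with $\bm{R}:=\bm{Q}^{T}\bm{U}\in\mathbb{R}^{k\times k}$. In the same way, $\bm{V}=\bm{Q}'\bm{R}'$ with $\bm{R}':=\bm{Q}'^{T}\bm{V}$. Because both $\bm{U}$ and $\bm{Q}$ are orthonormal bases of the same $k$-dimensional space, $\bm{R}$ and $\bm{R}'$ are in fact orthogonal. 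This is not needed for existence, but it shows $\bm{S}$ has the same singular values as $\bm{\Sigma}$.

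\textbf{Step 2 (construct $\bm{S}$).} Substituting gives $\bm{M}=\bm{Q}\bm{R}\bm{\Sigma}\bm{R}'^{T}\bm{Q}'^{T}$. I therefore set $\bm{S}:=\bm{R}\bm{\Sigma}\bm{R}'^{T}=\bm{Q}^{T}\bm{M}\bm{Q}'\in\mathbb{R}^{k\times k}$. As a check, $\bm{Q}\bm{S}\bm{Q}'^{T}=(\bm{Q}\bm{Q}^{T})\bm{M}(\bm{Q}'\bm{Q}'^{T})$. This equals $\bm{M}$ because the column space of $\bm{M}$ lies in $\operatorname{Span}(\bm{U})$ and its row space lies in $\operatorname{Span}(\bm{V})$, so both projectors act as the identity.

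\textbf{Main obstacle.} The algebra is routine, so the real work is fixing the interpretation. That means reading $\bm{V}$ as $\bm{V}^{T}$ in the statement and making the dimensions match (the same $k$ for both bases). It also means handling the truncated case, which the paper needs in order to apply the theorem to $\bm{U}_{p}\bm{\Sigma}_{p}\bm{V}_{p}^{T}$ and $\bm{W}'$. In the truncated case the equality should be stated for the rank-$k$ approximation. I will remark that if $\operatorname{Span}(\bm{Q})$ merely contains the column space of $\bm{M}$, the same choice $\bm{S}=\bm{Q}^{T}\bm{M}\bm{Q}'$ still works. This is the form the paper needs to argue that $\bm{S}_{p}$ can represent fine-tuned weights whose principal subspaces match the pre-trained ones.
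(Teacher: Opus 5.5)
Your proof is correct and follows the paper's argument. Both write $\bm{U}=\bm{Q}\bm{R}$ and $\bm{V}=\bm{Q}'\bm{R}'$ with orthogonal $k\times k$ transition matrices and set $\bm{S}=\bm{R}\bm{\Sigma}\bm{R}'^{T}$; your explicit choices $\bm{R}=\bm{Q}^{T}\bm{U}$, $\bm{R}'=\bm{Q}'^{T}\bm{V}$ and $\bm{S}=\bm{Q}^{T}\bm{M}\bm{Q}'$, checked through the projectors $\bm{Q}\bm{Q}^{T}$ and $\bm{Q}'\bm{Q}'^{T}$, make concrete what the paper only asserts to exist, and they also cover the case where the spans merely contain the column and row spaces of $\bm{M}$.
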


Unlike SVFit~\citep{sun2024svfit} and SVFT~\citep{lingam2024svft}, which assume
that $\bm{U}$ and $\bm{V}$ remain unchanged during fine-tuning, in practice,
$\bm{U}$ and $\bm{V}$ are likely to be updated to adapt to downstream tasks. Therefore,
we propose an alternative hypothesis: rather than $\bm{U}$ and $\bm{V}$ being strictly
invariant, the truly preserved components are the principal subspaces of
$\bm{U}_{p}^{\prime}$ and $\bm{V}_{p}^{\prime}$. This implies that the span of these
sets of singular vectors remains unchanged, \textit{i.e.},
$\text{Span}(\bm{U}_{p}^{\prime}) = \text{Span}(\bm{U}_{p}) \ \text{and}\ \text{Span}
(\bm{V}_{p}^{\prime}) = \text{Span}(\bm{V}_{p})$.
% Instead of updating \( \bm{U}_p \) and \( \bm{V}_p \), we freeze them and only train the intermediate matrix \( \bm{S}_p \in \mathbb{R}^{k \times k} \), which is equivalent to updating all three components \( \bm{U}_p\), \( \bm{\Sigma}_p \), and \( \bm{V}_p^T \).
According to \cref{the:subspace}, since the principal subspaces remain the same
before and after fine-tuning, there exists a transformation matrix $\bm{S}_{p}\in
\mathbb{R}^{k \times k}$ such that $\bm{W}^{\prime}\approx \bm{U}^{\prime}_{p}\bm
{\Sigma}^{\prime}_{p}\bm{V}^{\prime T}_{p}= \bm{U}_{p}\bm{S}_{p}\bm{V}_{p}^{T}$.
This suggests that rather than explicitly updating $\bm{U}_{p}$ and $\bm{V}_{p}$,
we can freeze them and only update the intermediate matrix $\bm{S}_{p}$.
%Therefore, training a tiny parameterized matrix \(\bm{S}_p \)
This is mathematically equivalent to updating all three components: $\bm{U}_{p}$,
$\bm{\Sigma}_{p}$, and $\bm{V}_{p}^{T}$, effectively removing the expressiveness
constraints imposed by freezing $\bm{U}$ and $\bm{V}$. The proof of this theorem
is provided in the Appendix (Sec. F.2).

To further illustrate the effect of a fully trainable matrix $\bm{S}_{p}\in \mathbb{R}
^{k \times k}$, as shown in \cref{fig:cersa-explain}, we apply an additional SVD
decomposition to the updated $\bm{S}_{p}$ after fine-tuning. The resulting
$\bm{V}_{\bm{s}_p}^{T}$, an $r_{1}\times r_{1}$ rotation matrix, rotates
$\bm{V}_{p}^{T}$ to adjust the spatial distribution of input features while
preserving the integrity of the subspace. Similarly, $\bm{U}_{\bm{s}_p}$ adjusts
the rotation of $\bm{U}_{p}$, concentrating key features within the intermediate
space along output directions essential for downstream tasks.

Additionally, our experiments (detailed in Appendix Sec. F.1) confirm that
in most full-parameter fine-tuning downstream tasks, the principal subspaces of $\bm
{W}$ and $\bm{W}^{\prime}$ exhibit a Grassmann subspace similarity~\citep{hu2022lora}
of 99\%$\sim$99.99\%. This provides strong empirical evidence supporting the assumption
that the principal subspaces of $\bm{W}$ remain nearly unchanged after fine-tuning.

% However, while the principal subspace of the pre-trained weights \(\bm{W}\) barely changes, restricting updates to only the singular values \(\bm{\Sigma_p}\) (as in SVFit~\citep{sun2024svfit}) imposes a significant limitation. Specifically, this approach merely scales the contributions of the pre-trained singular vectors without altering the fundamental input-output feature mappings. Consequently, such a restriction may hinder the model's ability to fully adapt to the input feature distributions and output representations required by downstream tasks. To address this, a trainable matrix \(\bm{S_p}\) enables more effective adaptation while maintaining the original subspace structure.
% (see \cref{fig:cersa} in the appendix).
%, which is mapped by \(U_w\) and \(V_w\).
%However, if the previously zero elements in the diagonal matrix are made trainable, it effectively allows the simultaneous optimization of \(U_w\) and \(V_w\). This enables the model to adapt both input feature distributions and output feature representations, significantly enhancing its flexibility and its ability to address the demands of downstream tasks.

Although this approach increases the number of trainable parameters from $k$ to $k^{2}$ (compared to SVFit~\citep{sun2024svfit}), the space saved through compression in the pre-trained model and the performance gains achieved make this increase a justifiable cost. The larger parameterization also enables stronger adaptability, as evidenced by a faster loss decrease during fine-tuning (see \cref{fig:loss-curve}).
At the same time, the benefits of fixing $\bm{U}_{p}$ and $\bm{V}_{p}$ are preserved, since the rotation matrix preserves the input and output subspace, only changing its basis representation. This ensures that features learned during pre-training remain intact, with only their distribution adjusted.

\begin{figure*}[!t]
    \centering
    %—— 第1列：图 1 ——%
    \begin{minipage}[c]{0.55\textwidth}
        \centering
        \includegraphics[width=\linewidth]{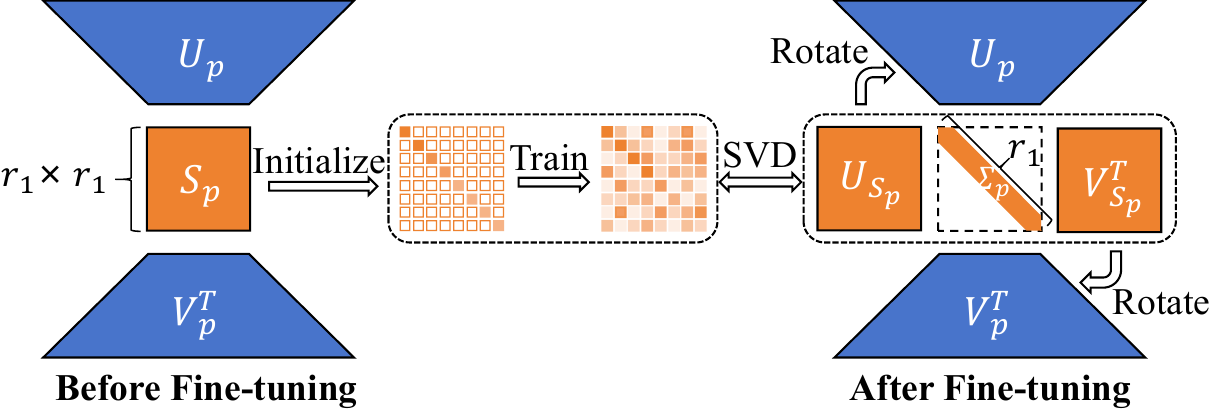}
        \vspace{-.12in}
        \captionof{figure}{Training process of CERSA. The trainable core matrix \(\bm{S}_{p}\) can be decomposed into \(\bm{U}_{\bm{s}_p}\) and \(\bm{V}_{\bm{s}_{p}}\), enabling fine-tuning by rotating the input and output bases without altering the subspace itself.} \label{fig:cersa-explain}
    \end{minipage}\hfill
    %—— 第2列：先表1再表2 ——%
    \begin{minipage}[c]{0.425\textwidth}
        \centering
        \includegraphics[width=0.9\linewidth]{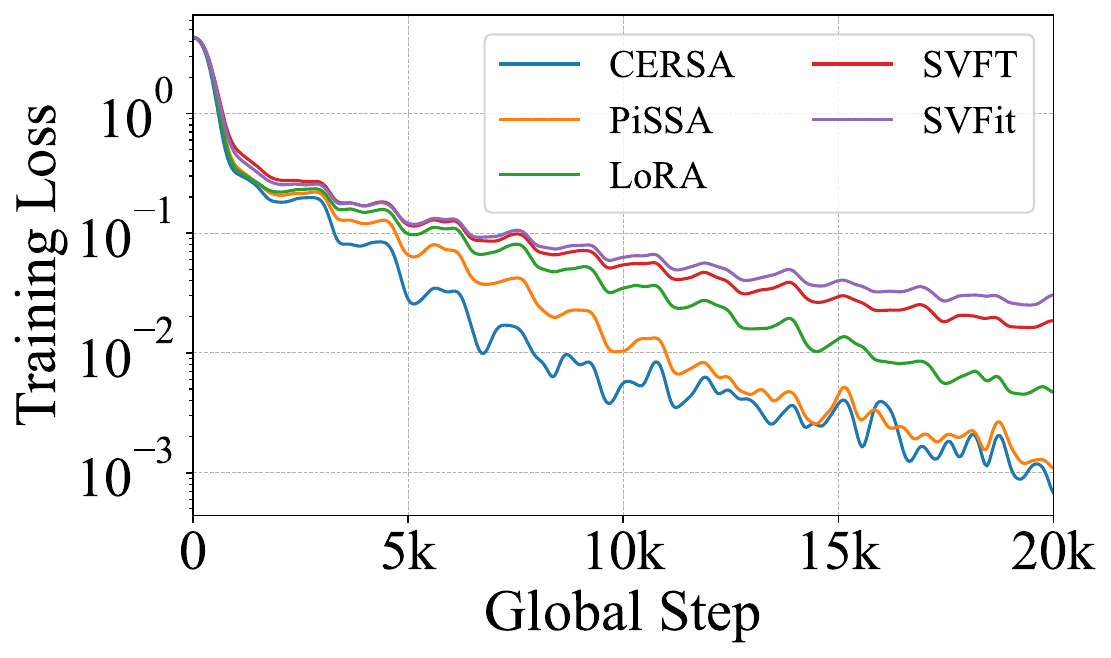}
        \vspace{-0.18in}
        \captionof{figure}{Loss curve of fine-tuning ViT-Large~\citep{dosovitskiy2021image} on CIFAR-100~\citep{krizhevsky2009learning} using various methods.}
        \label{fig:loss-curve}
    \end{minipage}
    \vspace{-0.1in}
\end{figure*}

\subsection{Memory Efficiency}
The primary goal of applying SVD to the pre-trained weight matrix is to reduce memory
consumption during fine-tuning. Consequently, a compression rank threshold $b$ exists, below which memory savings are achieved only if $r < b$. Given a pre-trained weight
matrix $\bm{W}\in \mathbb{R}^{m \times n}$, its truncated SVD decomposition
produces: $\bm{U}\in \mathbb{R}^{m \times r}$,
$\bm{\Sigma}\in \mathbb{R}^{r \times r}$, and $\bm{V}\in \mathbb{R}^{r \times n}$,
such that $\bm{W = U \Sigma V}$.

\begin{table}[!t]
    % \vspace{-.1in}
    \centering
    \footnotesize
    \setlength{\tabcolsep}{7pt}
    \renewcommand{\arraystretch}{1.0}
    \resizebox{.8\linewidth}{!}{%
    \begin{tabular}{@{}lccccc@{}}
        \toprule         & FT    & CERSA (Ours)       & SVFit          & SVFT       & LoRA             \\
        \midrule Weights & $mn$  & $mr + nr + r^{2}$  & $2mn + m$      & $2mn + e$  & $mn + mr + nr$   \\
        Gradients        & $mn$  & $r^{2}$            & $r$            & $e$        & $mr + nr$        \\
        Opt.\ states     & $2mn$ & $2r^{2}$           & $2r$           & $2e$       & $2mr + 2nr$      \\
        Total            & $4mn$ & $mr + nr + 4r^{2}$ & $2mn + m + 3r$ & $2mn + 4e$ & $mn + 4mr + 4nr$ \\
        \bottomrule
    \end{tabular}}
    \vspace{-3pt}
    \caption{Memory requirements. In SVFT, $e$ is the number of sparsified
    trainable parameters from the diagonal after SVD, where $e \ll mn$ but $e > m$.}
    \label{tab:memory_analysis}
    \vspace{-.2in}
\end{table}

During training, memory usage consists of the frozen matrices $\bm{U}$ and $\bm{V}$,
along with the trainable matrix $\bm{S}$, requiring $\mathcal{O}(mr + nr + r^{2})$
storage. Additionally, since $\bm{S}$ is trainable, its gradient and optimizer
states contribute $\mathcal{O}(r^{2})$ and $\mathcal{O}(2r^{2})$, respectively,
leading to a total memory requirement of $\mathcal{O}(mr + nr + 4r^{2})$. \cref{tab:memory_analysis}
compares memory costs across different methods.

In contrast, the original weight matrix $\bm{W}$ requires $\mathcal{O}(mn)$
memory, excluding the additional $\mathcal{O}(3mn)$ for gradient and optimizer states,
as we focus on reducing memory relative to the pre-trained parameters. The compression
rate $c$ depends on the model's input-output dimensions, with a lower value
indicating better memory efficiency: $c = \frac{mr + nr + 4r^{2}}{mn}$. Based on
this, we compute the compression curves for three variants of the ViT model:
Base, Large, and Huge, across cumulative energy retention rates ranging from 0.8
to 0.95, as shown in \cref{fig:vit-compression-rate}. The calculations are performed
for two target module configurations: one with all three Q, K, and V matrices
and another with only Q and V matrices. The dashed horizontal line in \cref{fig:vit-compression-rate}
represents the compression rate achieved by the LoRA method when fine-tuning only
Q and V matrices with a rank of 32. The results indicate that for smaller models,
such as ViT-Base, the compression rate is less favorable, possibly due to the
limited embedding dimension. However, for larger models like ViT-Large and ViT-Huge~\citep{dosovitskiy2021image},
there is considerably more room for compression, enabling greater memory efficiency.

    \section{Experiments}
We conduct extensive evaluations on both image classification and natural language understanding (NLU) tasks. Further experimental results, including subject-driven text-to-image generation and out-of-distribution evaluations, are provided in the appendix(see Sec. A.1 and Sec. E).

\subsection{Experimental Setup}
%
% \noindent\textbf{Implementation.}\quad
%
\para{Baseline selection.} For baseline comparisons, we include full-parameter fine-tuning
(FT), popular PEFT methods such as LoRA~\citep{hu2022lora} and PiSSA~\citep{meng2024pissa},
and weight-decomposition-based approaches like SVFit~\citep{sun2024svfit} and
SVFT~\citep{lingam2024svft}.

\para{Model selection.} For image classification, we evaluate ViT-Base and ViT-Large~\citep{dosovitskiy2021image},
pre-trained on ImageNet-21K \citep{deng2009imagenet}. For the NLU experiment, we fine-tune DeBERTaV3-Base ~\citep{he2023debertav3}
to assess the fundamental capabilities of our method.

% abation study

\begin{figure*}[!t]
    \centering
    %—— 第一列：图 1 ——%
    \begin{minipage}[c]{0.48\textwidth}
        \centering
        \includegraphics[width=0.9\linewidth]{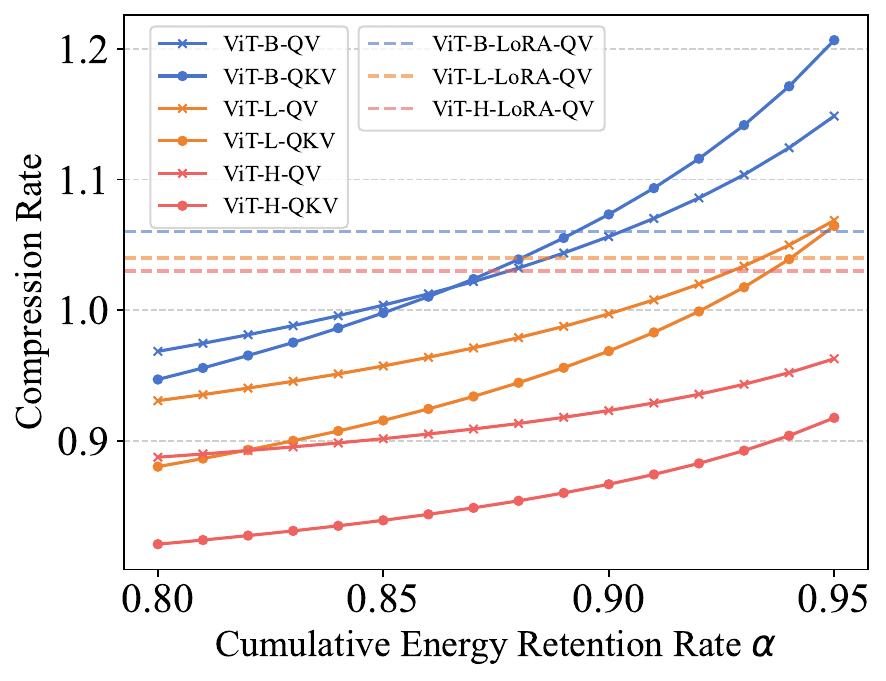}
        \vspace{-.18in}
        \captionof{figure}{Comparison on ViT compression rates across various cumulative energy retention rates.}
        \label{fig:vit-compression-rate}
    \end{minipage}\hfill
    %—— 第三列：先表1再表2 ——%
    \begin{minipage}[c]{0.48\textwidth}
        \centering
        \resizebox{\linewidth}{!}{%
            \begin{tabular}{lccccc}
            \toprule Method           & CIFAR-100     & RESISC45      & DTD           & Average       & Total Memory \\
            \midrule CERSA(Q, V)      & 94.0          & 95.8          & 82.1          & 90.6          & 1194.5 MB    \\
            CERSA(Q, K, V)            & \textbf{94.4} & \textbf{96.1} & 82.5          & \textbf{91.0} & 1232.9 MB    \\
            CERSA(Q, K, V, P)         & \textbf{94.5} & 96.0          & \textbf{82.6} & \textbf{91.0} & 1279.5 MB    \\
            CERSA(Q, K, V, P, UP, DN) & 93.8          & 94.9          & 81.6          & 90.1          & 1433.1 MB    \\
            \bottomrule
        \end{tabular}}
        \vspace{-0.1in}
        \captionof{table}{Results of CERSA with various matrix type combinations for the ViT-Large model~\citep{dosovitskiy2021image}. For the definitions of (Q, K, V, P, UP, DN), please refer to~\cref{fig:layerwise_svd}.}
        \label{tab:matrix_types}
        
        \vspace{0.1in}

        \resizebox{\linewidth}{!}{%
            \begin{tabular}{lccccc} \toprule
            Method & CIFAR-100 & RESISC45 & DTD  & Average    & Total Memory \\
            \midrule
            Top-\(r_1\) & \textbf{93.4} & \textbf{95.6} & \textbf{81.3} & \textbf{90.1} & 1112.7 MB \\
            Bottom-\(r_2\)       & 92.6     & 94.7     & 79.9  & 89.1   & 1112.7 MB     \\
            \bottomrule
            \end{tabular}}
            \vspace{-.1in}
            \captionof{table}{Results of fine-tuning on Top-\(r_1\) and Bottom-\(r_2\) rank for the ViT-Large model~\citep{dosovitskiy2021image}.}
            \label{tab:top-bottom}
        \end{minipage}
        \vspace{-0.12in}
\end{figure*}

\begin{figure*}[!t]
    \vspace{-.05in}
    \centering
    \begin{minipage}[c]{0.48\textwidth}
        \centering
        \resizebox{0.98\linewidth}{!}{
        \begin{tabular}{lccccc}
            \toprule Method                                 & CIFAR-100     & RESISC45      & DTD           & Average       & Total Memory \\
            \midrule $\text{CERSA}_{\alpha=1,\beta=1}$      & 93.8          & \textbf{96.3} & 81.8          & 90.6          & 2519.6 MB    \\
            $\text{CERSA}_{\alpha=0.95,\beta=0.95}$         & \textbf{94.3} & 96.1          & \textbf{82.5} & \textbf{91.0} & 1232.9 MB    \\
            $\text{CERSA}_{\alpha=0.9,\beta=0.9}$           & 93.9          & 96.1          & 82.1          & 90.7          & 1122.5 MB    \\
            $\text{CERSA}_{\alpha=0.8,\beta=0.8}$           & 93.5          & 95.9          & 81.8          & 90.4          & 1020.6 MB    \\
            $\text{CERSA}_{\alpha=0.5,\beta=0.5}$           & 90.0          & 95.1          & 79.5          & 88.2          & 914.9 MB     \\
            \midrule $\text{CERSA}_{\alpha=0.95,\beta=0.9}$ & 94.0          & 96.0          & \textbf{82.5} & 90.8          & 1169.4 MB    \\
            $\text{CERSA}_{\alpha=0.95,\beta=0.8}$          & 93.8          & 96.1          & 82.2          & 90.7          & 1118.2 MB    \\
            $\text{CERSA}_{\alpha=0.95,\beta=0.5}$          & 92.9          & 95.2          & 80.3          & 89.5          & 1079.3 MB    \\
            \bottomrule
        \end{tabular}
        }
        \vspace{-.1in}
        \captionof{table}{Results of various cumulative energy retention rates for the ViT-Large model \citep{dosovitskiy2021image} across the CIFAR-100 \citep{krizhevsky2009learning}, RESISC45 \citep{cheng2017remote}, and DTD \citep{cimpoi2014describing} datasets.}
        \label{tab:cer-rate}
        \vspace{-.12in}
    \end{minipage}\hfill
    \begin{minipage}[c]{0.48\textwidth}
        \centering
        \resizebox{\linewidth}{!}{
        \begin{tabular}{lccccc}
            \toprule Method                          & CIFAR-100     & RESISC45      & DTD           & Average       & Total Memory \\
            \midrule Layer-wise ($\alpha=\beta=0.9$) & \textbf{93.9} & \textbf{96.1} & \textbf{82.1} & \textbf{90.7} & 1122.5 MB    \\
            Uniform ($r=287$)                        & 93.7          & 95.6          & 81.4          & 90.2          & 1122.5 MB    \\
            \bottomrule
        \end{tabular}}
        \vspace{-.05in}
        \captionof{table}{Results of layer-wise and uniform rank for the ViT-Large model \citep{dosovitskiy2021image}. }
        \label{tab:uniform-layerwise} 
        
        \resizebox{\linewidth}{!}{%
        \begin{tabular}{lccccc}
            \toprule Method          & CIFAR-100     & RESISC45      & DTD           & Average       & Total Memory \\
            \midrule CERSA w/ Matrix & \textbf{93.9} & \textbf{96.1} & \textbf{82.1} & \textbf{90.7} & 1122.5 MB    \\
            CERSA w/ Array           & 93.5          & 95.2          & 81.5          & 90.0          & 1045.5 MB    \\
            \bottomrule
        \end{tabular}}
        \vspace{-.05in}
        \captionof{table}{Results of CERSA with a trainable matrix or array for the ViT-Large model \citep{dosovitskiy2021image}.}
        \label{tab:matrix_array}
        \vspace{-.12in}
    \end{minipage}
    \vspace{-.12in}
\end{figure*}

\para{Datasets.} For image classification, we assess our method on eight diverse
datasets: CIFAR-100~\citep{krizhevsky2009learning}, EuroSAT \citep{helber2019eurosat},
RESISC45~\citep{cheng2017remote}, StanfordCars~\citep{krause20133d}, FGVC
Aircraft~\citep{maji2013fine}, DTD~\citep{cimpoi2014describing}, CIFAR-10~\citep{krizhevsky2009learning},
and OxfordPets~\citep{parkhi2012cats}. These datasets span a variety of
classification tasks, including general object classification, fine-grained
classification, remote sensing image classification, and texture classification.

For NLU, we evaluate our method on eight datasets from the GLUE benchmark \citep{wang2019glue}:
MNLI, MRPC, RTE, CoLA, SST-2, QNLI, QQP, and STS-B. These datasets cover a broad
spectrum of NLU tasks, including textual entailment, paraphrase detection,
sentiment analysis, question-answer matching, and semantic textual similarity.

\para{Metrics.} For image classification, we report accuracy across all datasets.
For NLU, we report overall matched and mismatched accuracy on MNLI, Matthew's correlation
on CoLA, Pearson correlation on STS-B, and accuracy on the remaining datasets.
Higher values indicate better performance.

\subsection{Ablation Study}
\noindent

\para{Impact of the matrix type.} We study the trade-off between performance and memory when fine-tuning different matrix types. As shown in \cref{tab:matrix_types}, adapting Q, K, and V achieves the best balance of accuracy and efficiency. In contrast, adding P, UP, or DN increases memory cost and even reduces performance. This is mainly because: (i) these matrices require higher ranks to preserve cumulative energy (\cref{fig:layerwise_svd}), making them less memory-efficient; and (ii) modifying them disrupts pre-trained feature representations, leading to overfitting or weaker generalization. Thus, restricting fine-tuning to Q, K, and V provides the optimal trade-off.

\para{Impact of top-\(r_1\) versus bottom-\(r_2\) ranks.} To assess the effect of fine-tuning major versus residual components, we compare $\bm{S}_p$ trained on the top-$r_1$ and bottom-$r_2$ components (with $r_1 = r_2$, $\alpha=0.95$). As shown in Tab.~\ref{tab:top-bottom}, adapting the top components consistently yields higher accuracy, validating our design. Moreover, the OOD results in the appendix(see  Sec. E) show that CERSA surpasses both LoRA and FT, confirming that it preserves rather than distorts the original knowledge.

\para{Impact of the cumulative energy retention rate.} 
We investigate how different retention rates affect fine-tuning by training ViT-Large~\citep{dosovitskiy2021image} under various $\alpha$ and $\beta$ configurations (\cref{tab:cer-rate}). The first five settings use $\alpha=\beta$ decreasing from 1 to 0.5, while the last three fix $\alpha=0.95$ and vary $\beta$ to adjust the trainable subspace size. Among them, $\text{CERSA}_{\alpha=0.95,\beta=0.95}$ achieves the best overall accuracy. Performance remains stable even at $\alpha=\beta\approx0.8$ (90\% of pre-trained memory), but drops sharply when reduced to 0.5 or lower.

\para{Impact of layer-wise versus uniform.} We compare layer-wise CERSA, which selects singular values by each layer's cumulative energy retention rate, with uniform CERSA, which fixes the rank at 287 for all layers. As shown in \cref{tab:uniform-layerwise}, despite identical memory consumption, layer-wise CERSA consistently outperforms the uniform variant, demonstrating the effectiveness of exploiting layer-specific retention rates.

\para{Impact of tuning a matrix versus array.} We analyze the impact of defining the $\bm{S}$ matrix as either a matrix or an array (as in SVFit~\citep{sun2024svfit}) on fine-tuning performance
% We compared fine-tuning with matrices versus arrays as used in SVFit~\citep{sun2024svfit}
under the same CERSA configuration $\alpha=0.9,\beta=0.9$ for Q, K, and V. As
shown in \cref{tab:matrix_array}, although arrays significantly reduce memory usage,
they result in a substantial drop in performance, highlighting that matrix-based
fine-tuning has much better expressiveness.

% --------------------------------------------------------
% \begin{figure*}[!t]
%     \centering
%     \includegraphics[width=0.96\linewidth]{Imgs/dreambooth.pdf}
%     \vspace{-.10in}
%     \caption{Visual comparison of images generated by the fine-tuned subject-driven
%     diffusion model using DreamBooth~\citep{ruiz2023dreambooth} (full-parameter
%     fine-tuning), LoRA~\citep{hu2022lora}, and the proposed CERSA.}
%     \label{fig:dreambooth}
%     \vspace{-.05in}
% \end{figure*}
% --------------------------------------------------------

\begin{table*}
    [!t]
    \centering
    \small
    \resizebox{\linewidth}{!}{%
    \begin{tabular}{l|c|ccccccccc}
        \toprule Method & Memory & CIFAR-100     & EuroSAT       & RESISC45         & StanfordCars     & FGVC-Aircraft & DTD           & CIFAR-10      & OxfordPets    & Average       \\
        \midrule FT  & 4629.8 MB      & 93.6          & 99.0          & \underline{96.4} & \underline{88.9} & 68.3          & 81.8          & 99.2          & 94.4          & 90.2       \\
        LoRA   & 1229.9 MB         & \textbf{94.9} & 99.0          & 94.7             & 80.3             & 54.5          & 81.5          & 99.1          & 94.8        & 87.4 
         \\
        PiSSA  & 1229.9 MB         & 93.6          & 98.6          & 95.7             & 86.7             & 62.6          & 81.8          & 98.8          & \textbf{95.7} & 89.2     \\
        SVFit  & 1351.5 MB           & 93.9          & 98.7          & 95.2             & 83.3             & 57.8          & 81.5          & \textbf{99.3} & 93.4          & 88.7       \\
        SVFT  & 1355.8 MB           & 93.7          & 98.8          & 95.4             & 84.9             & 63.7          & 82.3          & \textbf{99.3} & 93.3          & 88.9        \\
        CERSA  & 1232.3 MB         & 94.3          & \textbf{99.1} & \textbf{96.1}    & \textbf{87.6}    & \textbf{71.1} & \textbf{82.5} & \textbf{99.3} & 94.9          & \textbf{90.6}  \\
        \bottomrule
    \end{tabular}
    } % end resizebox
    \vspace{-.1in}
    \caption{Comparison of various fine-tuning methods on eight image
    classification datasets using ViT-Large~\citep{dosovitskiy2021image}.
    Methods include LoRA~\citep{hu2022lora}, PiSSA~\citep{meng2024pissa}, SVFit~\citep{sun2024svfit},
    and SVFT~\citep{lingam2024svft}. Bold scores indicate the highest accuracy
    among PEFT methods, while underlined scores indicate that full-parameter fine-tuning
    (FT) achieves the best performance.}
    \label{tab:vit_large_results}
    \vspace{-.15in}
\end{table*}

\begin{table*}
    [!t]
    \centering
    \small
    \resizebox{0.95\linewidth}{!}{%
    \begin{tabular}{l|c|ccccccccc}
        \toprule Method & Memory & MNLI          & MRPC          & STS-B         & RTE           & SST-2         & QNLI          & QQP              & CoLA          & Average       \\
        \midrule FT  & 2814.0 MB    & 89.9          & 89.5          & 91.6          & 83.8          & 95.6          & 94.0          & \underline{92.4} & 69.2          & 88.3          \\
        LoRA    & 730.5 MB        & \textbf{90.7} & 90.0          & 91.6          & 85.2          & 95.0          & 93.9          & 92.0             & 69.8          & 88.5          \\
        PiSSA   & 730.5 MB       & 90.4          & 91.7          & \textbf{91.9} & 87.0          & 95.9          & 94.3          & 92.3             & \textbf{72.6} & \textbf{89.5} \\
        SVFit   & 1096.3 MB       & 89.7          & 88.8          & 91.8          & \textbf{87.4} & 95.4          & 94.3          & 90.2             & 71.0          & 88.6          \\
        SVFT   & 1108.6 MB        & 90.0          & 89.0          & 91.8          & 87.2          & 95.4          & 94.3          & 91.5             & \textbf{72.6} & 89.0          \\
        CERSA  & 728.6 MB        & 90.3          & \textbf{92.0} & 91.7          & 87.0          & \textbf{96.0} & \textbf{94.4} & \textbf{92.4}    & 72.3          & \textbf{89.5} \\
        \bottomrule
    \end{tabular}}
    \vspace{-.1in}
    \caption{Comparison of different methods on the GLUE benchmark using the
    DeBERTaV3-Base model~\citep{he2023debertav3}. Methods include LoRA~\citep{hu2022lora},
    PiSSA~\citep{meng2024pissa}, SVFit~\citep{sun2024svfit}, and SVFT~\citep{lingam2024svft}.}
    \label{tab:deberta_results}
    \vspace{-.2in}
\end{table*}

\subsection{Comparison on Image Classification Tasks}
Experimental results for the ViT-Large model~\citep{dosovitskiy2021image} are
presented in \cref{tab:vit_large_results}. With ViT-Large~\citep{dosovitskiy2021image},
CERSA achieves an average accuracy of 90.6\%, outperforming full-parameter fine-tuning
(90.2\%) and significantly surpassing other PEFT methods like SVFT~\citep{lingam2024svft}
(88.9\%) and SVFit~\citep{sun2024svfit} (88.7\%). Notably, CERSA excels on fine-grained
classification tasks, particularly for datasets like StanfordCars~\citep{krause20133d}
and FGVC Aircraft~\citep{maji2013fine}, highlighting its capability to capture
intricate details. Besides, CERSA matches or exceeds full-parameter fine-tuning
on general datasets like CIFAR-100~\citep{krizhevsky2009learning}, EuroSAT~\citep{helber2019eurosat},
and RESISC45~\citep{cheng2017remote}, demonstrating its strong generalization
and adaptability across diverse tasks.
% Overall, CERSA surpasses existing PEFT methods and frequently matches or outperforms FT.
% Its strong performance on fine-grained classification highlights its capability to capture intricate details, while its robust results on general datasets confirm its adaptability across diverse tasks.

% Additionally, CERSA generates more consistent images under the same random seed. For instance, in ``A [V] cat with a blue house in the background'' both CERSA and full fine-tuning correctly depict the blue house, whereas LoRA generates only a vague blue patch.

\subsection{Comparison on NLU Tasks}
\cref{tab:deberta_results} compares fine-tuning strategies on eight GLUE datasets using DeBERTaV3-Base~\citep{he2023debertav3}. CERSA achieves the highest average score (89.5\%), outperforming both full-parameter fine-tuning and other PEFT methods, and sets new state-of-the-art results on multiple datasets, including MRPC, SST-2, QNLI, and QQP. For the remaining tasks, it also attains competitive performance compared with state-of-the-art methods PiSSA~\citep{meng2024pissa}, SVFit~\citep{sun2024svfit}, and SVFT~\citep{lingam2024svft}. This highlights the effectiveness of CERSA in leveraging pre-trained representations with minimal computational overhead, making it a strong choice for NLU tasks.

    \section{Conclusion}
We propose CERSA, a memory- and parameter-efficient fine-tuning method that performs layer-wise rank selection based on the cumulative energy retention of pre-trained weights, enabling adaptation within the principal subspace. We prove that CERSA achieves performance comparable to full fine-tuning, and extensive experiments on image classification and language understanding show it outperforms or matches state-of-the-art PEFT methods while reducing memory.  

\para{Limitation and Future Work.} Since CERSA constrains fine-tuning within the principal subspace of pre-trained weights, its performance may degrade when the downstream task significantly deviates from the knowledge captured during pre-training. In the future, we plan to extend CERSA's capabilities by dynamically adjusting its learned subspace during fine-tuning, thereby enhancing its adaptability and performance across a broader range of downstream tasks.
    
    \clearpage
    
    \bibliography{iclr2026_conference}
    \bibliographystyle{iclr2026_conference}

    \clearpage
    
    \appendix
    \section*{Appendix}   % 无编号的大标题
    \FloatBarrier
    \setcounter{section}{0}  % 保证从 A 开始
    \renewcommand{\thesection}{\Alph{section}}  % 用 A, B, C 编号
    % \section{Comparison between SVFit and CERSA}
% \begin{figure*}[!t]
%     \centering
%     \includegraphics[width=0.95\linewidth]{Imgs/CERSA-SVFit.pdf}
%     \vspace{-.10in}
%     \caption{Training process of CERSA \vs SVFit.}
%     \label{fig:cersa-compare}
%     \vspace{-.10in}
% \end{figure*}

% As shown in \cref{fig:cersa-compare}, CERSA retains the singular values obtained from the pre-trained decomposition and uses them to initialize the diagonal elements of the \( S_p \) matrix, while the remaining non-diagonal elements are initialized to zero, allowing them to be trained. After training, since any matrix can be written in the form of SVD decomposition, applying SVD to \( S_p \) results in a form that can be interpreted as the product of an orthogonal matrix \(V_{s_p}^T \)that rotates the input space spanned by \( V \), a new spectral energy distribution, and another orthogonal matrix \(U_{s_p} \) that rotates the output space \( U \). SVFit \citep{sun2024svfit} only adapts the singular values that redistribute the spectral energy in the fixed original direction of input and output space, limiting its expressiveness.

\section{Additional Experimental Results}
\subsection{Subject-driven Text-to-Image Generation}
\begin{figure*}
    \centering
    \includegraphics[width=\linewidth]{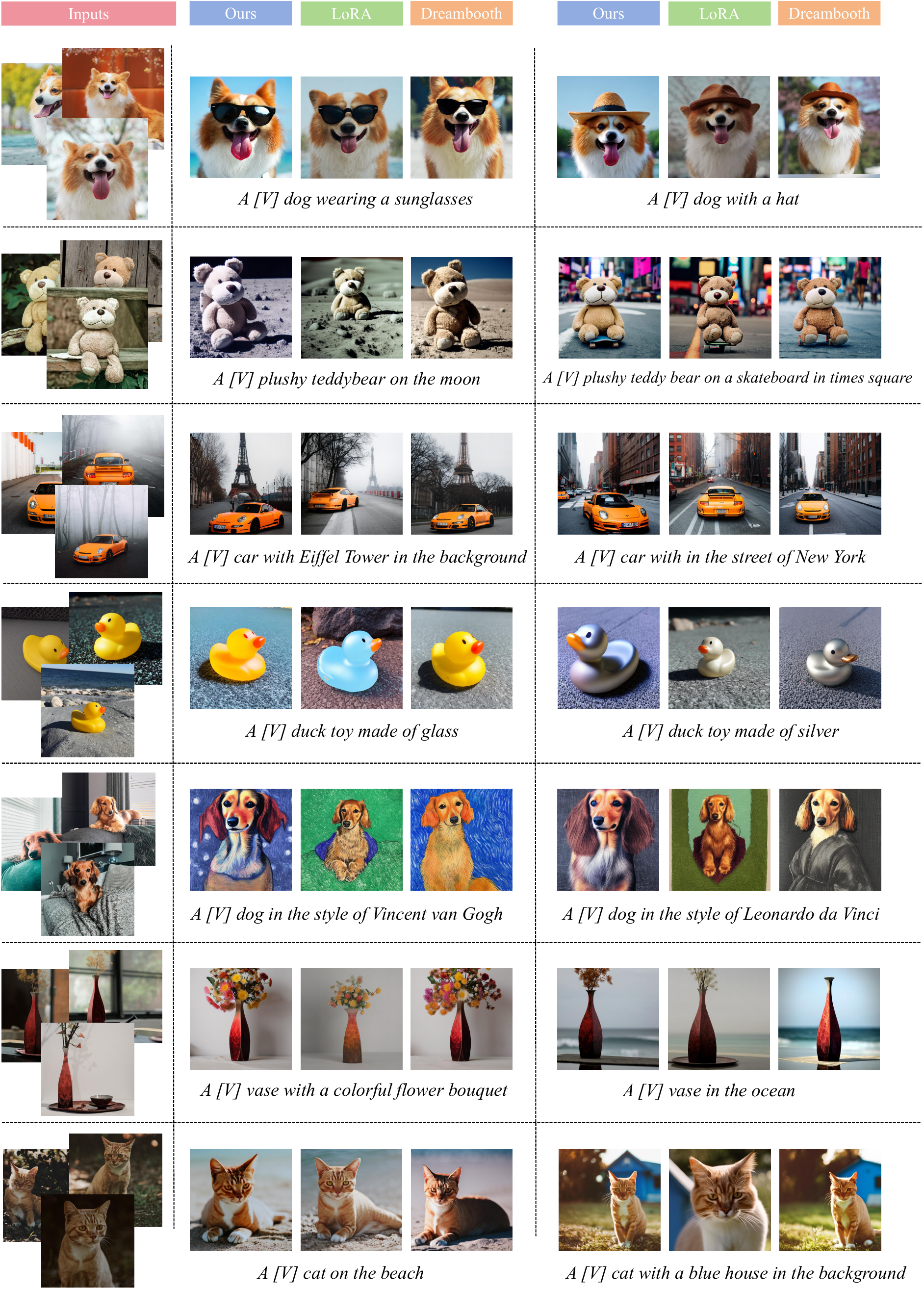}
    \caption{Results of visual comparison generated by the subject-driven
    fine-tuned diffusion model using the proposed CERSA, LoRA~\citep{hu2022lora},
    and DreamBooth~\citep{ruiz2023dreambooth}.}
    \label{fig:dreambooth-full}
    \vspace{-0.2in}
\end{figure*}

For subject-driven text-to-image generation, we fine-tune models using selected
samples from the DreamBooth~\citep{ruiz2023dreambooth} and CustomConcept101~\citep{Kumari2022MultiConceptCO}
datasets. Each subject sample contains 5 to 6 images captured from different angles
and contexts. We compare full-parameter fine-tuning (FT), LoRA~\citep{hu2022lora}, and our proposed CERSA on this task.

As shown in \cref{fig:dreambooth-full}, we evaluate subject-driven generation
across multiple domains, including scene composition, material modification, and
artistic style transfer:
\begin{itemize}
    \item \textbf{Scene composition.} When placing a sports car in front of the Eiffel Tower or on a New York street, CERSA captures both background details and subject fidelity more accurately than LoRA, producing results that more closely resemble FT.
    \item \textbf{Material modification.} Applying glass and silver textures to a duck toy highlights CERSA’s strength: it preserves the subject’s original shape and features while achieving consistent material transfer. In contrast, LoRA and FT often distort shapes or fail to maintain color/material consistency.
    \item \textbf{Style transfer.} When adapting a dog’s image into the styles of Vincent van Gogh and Leonardo da Vinci, all three methods demonstrate recognizable style transfer, but CERSA produces visuals that align more closely with FT while avoiding artifacts.
\end{itemize}

\paragraph{Quantitative comparison.}
We use CLIPScore~\citep{hessel2022clipscorereferencefreeevaluationmetric} to assess prompt-image alignment. CERSA achieves the highest
average CLIPScore(\textbf{32.75}), outperforming LoRA (31.88) and FT (32.35), indicating better generation quality.

Overall, these results demonstrate that CERSA achieves high-quality subject-driven
image generation, consistently surpassing LoRA and closely matching or even
exceeding the performance of full-parameter fine-tuning, while being significantly
more memory- and parameter-efficient.

\subsection{Image Classification}

\begin{table*}[t]
    \centering
    \small
    \resizebox{\linewidth}{!}{%
    \begin{tabular}{lccccccccc}
        \toprule                     % \\ \hline
        Method                      & CIFAR-100        & EuroSAT          & RESISC45         & StanfordCars  & FGVC-Aircraft & DTD           & CIFAR-10         & OxfordPets    & Average       \\
        \midrule FT                 & \underline{92.4} & \underline{99.1} & \underline{96.1} & 79.8          & 54.8          & 77.7          & \underline{98.9} & 93.1          & 86.5          \\
        LoRA     & 92.0             & 98.4             & 92.7             & 45.5          & 25.2          & 75.0          & \textbf{98.8}    & 93.1          & 77.6          \\
        PiSSA & 91.2             & 98.7             & 95.5             & 67.1          & 47.6          & 78.7          & 98.6             & \textbf{95.9} & 84.2          \\
        SVFit  & 91.6             & 98.6             & 93.0             & 67.2          & 47.9          & 80.5          & \textbf{98.8}    & 92.3          & 83.7          \\
        SVFT & 91.2             & 98.5             & 92.4             & 67.5          & 56.2          & 79.8          & 98.7             & 92.5          & 84.6          \\
        CERSA                       & \textbf{92.1}    & \textbf{98.9}    & \textbf{95.6}    & \textbf{83.9} & \textbf{68.2} & \textbf{81.2} & \textbf{98.8}    & 93.2          & \textbf{89.0} \\
        \bottomrule
    \end{tabular}
    }
    \vspace{-.1in}
    \caption{Comparison of various fine-tuning methods on eight image
    classification datasets using ViT-Base~\citep{dosovitskiy2021image}.
    Methods include LoRA~\citep{hu2022lora}, PiSSA~\citep{meng2024pissa}, SVFit~\citep{sun2024svfit}, and SVFT~\citep{lingam2024svft}. Bold scores indicate the highest accuracy among PEFT methods, while underlined scores indicate that full-parameter fine-tuning (FT) achieves the best performance.}
    \label{tab:vit_base_results}
    \vspace{-.1in}
\end{table*}

\begin{table*}[t]
    \centering
    % \small
    \resizebox{\linewidth}{!}{%
    \begin{tabular}{lccccccccc}
        \toprule                                          % \\\hline
                                                         & CIFAR-100     & EuroSAT       & RESISC45      & StanfordCars  & FGVC-Aircraft & DTD           & CIFAR-10      & OxfordPets    & Average \\
        \midrule $\text{CERSA}_{\alpha=1,\beta=1}$       & 91.3          & 97.6          & 85.5          & 72.8          & 64.6          & 78.9          & 98.6          & 92.3          & 85.2    \\
        $\text{CERSA}_{\alpha=0.95,\beta=0.95}$          & \textbf{92.1} & \textbf{98.9} & \textbf{95.6} & \textbf{83.9} & \textbf{68.2} & \textbf{81.2} & \textbf{98.8} & \textbf{93.2} & 89.0    \\
        $\text{CERSA}_{\alpha=0.9,\beta=0.9}$            & \textbf{92.1} & 98.6          & 95.3          & 83.5          & 68.2          & 80.3          & 98.6          & \textbf{93.2} & 88.7    \\
        $\text{CERSA}_{\alpha=0.8,\beta=0.8}$            & 91.1          & 98.1          & 94.9          & 80.2          & 67.4          & 78.1          & 98.5          & 92.6          & 87.6    \\
        $\text{CERSA}_{\alpha=0.5,\beta=0.5}$            & 83.3          & 95.4          & 91.7          & 62.9          & 50.7          & 67.4          & 96.3          & 87.7          & 79.4    \\
        \midrule $\text{CERSA}_{\alpha=0.95,\beta=0.95}$ & 92.1          & \textbf{98.9} & \textbf{95.6} & \textbf{83.9} & 68.2          & \textbf{81.2} & \textbf{98.8} & \textbf{93.2} & 89.0    \\
        $\text{CERSA}_{\alpha=0.95,\beta=0.9}$           & 92.1          & 98.6          & 95.3          & 83.7          & 69.8          & 80.5          & 98.7          & 93.1          & 88.9    \\
        $\text{CERSA}_{\alpha=0.95,\beta=0.8}$           & \textbf{92.3} & 98.1          & 95.1          & 81.5          & \textbf{70.4} & 79.0          & 98.6          & \textbf{93.2} & 88.5    \\
        $\text{CERSA}_{\alpha=0.95,\beta=0.5}$           & 91.5          & 95.4          & 94.3          & 75.8          & 60.2          & 77.6          & 98.4          & \textbf{93.2} & 85.8    \\
        \bottomrule
    \end{tabular}
    }
    \vspace{-.1in}
    \caption{Evaluation results of CERSA on eight image classification datasets
    under different $\alpha$ and $\beta$ settings using ViT-Base~\citep{dosovitskiy2021image}.
    }
    \label{tab:different-cumulation-energy-retention-rate-full}
    \vspace{-.1in}
\end{table*}

In addition to testing the performance of CERSA on ViT-Large \citep{dosovitskiy2021image},
we also test it on ViT-Base \citep{dosovitskiy2021image}. With ViT-Base~\cref{tab:vit_base_results},
CERSA achieve an average accuracy of 89.0\% across eight datasets, outperforming
full parameter fine-tuning ({\bf FT}) (86.5\%) and significantly surpassing other
PEFT methods like LoRA~\citep{hu2022lora} (77.6\%), PiSSA~\citep{meng2024pissa} (84.2\%),
SVFT~\citep{lingam2024svft} (84.6\%), and SVFit~\citep{sun2024svfit} (83.7\%). It
also excels on fine-grained classification tasks, particularly Stanford Cars and
FGVC Aircraft~\citep{maji2013fine}, and matches or exceeds FT on general
datasets like CIFAR-10~\citep{krizhevsky2009learning}, Oxford Pets~\citep{parkhi2012cats},
and DTD~\citep{cimpoi2014describing}, demonstrating strong generalization. Compared
to ViT-Large~\citep{dosovitskiy2021image}, the performance advantage of our method
is more obvious on ViT-base~\citep{dosovitskiy2021image}, but the compression rate
is not as good as that of the large model.

Additionally, in \cref{tab:different-cumulation-energy-retention-rate-full}, we evaluate
the performance of all image classification tasks on ViT-Base~\citep{dosovitskiy2021image}
under different settings of the cumulative energy retention ratio. In the first set
of experiments, we set $\alpha=\beta$, which means that the entire principal subspace
corresponding to the cumulative energy is fine-tuned. We test performance under
different cumulative energy retention ratios \{0.95, 0.9, 0.8, 0.5\}. In the
second set of experiments, we fix $\alpha=0.95$ and examine the performance of fine-tuning only a portion of the principal subspace, with $\beta$ set to 0.9, 0.8, and 0.5,
respectively.

In the first set of experiments, we observe that the average performance drop is
minimal (only 1.4\%) when the cumulative energy retention ratio ranges from 0.95
to 0.8. Only when the ratio decreased to 0.5 did a significant performance
decline occur. This indicates that we have ample room to trade off a slight performance
loss for a substantial reduction in overall memory consumption. In the second
set of experiments, we found that reducing $\beta$ from 0.95 to 0.8 results in only
a performance drop of 0.5\%, and even at $\beta=0.5$, the performance decrease
is limited to 3.2\%. This suggests that fine-tuning only the most principal part
of the preserved subspace allows for a more parameter-efficient approach while
incurring only a minor performance loss.

\section{More Implementation Details}

\para{Experimental Environment.} All experiments were conducted on an NVIDIA L40
GPU using the PyTorch framework~\citep{paszke2019pytorch} and Hugging Face's \texttt{Transformers} library~\citep{wolf2020transformers} for fine-tuning.

\para{Settings for Image Classification.} For image classification, we fine-tune
ViT-Base and ViT-Large~\citep{dosovitskiy2021image} on the Query ({\bf Q}), Key ({\bf K}), and Value ({\bf V}) matrices within the attention module. In our method, CERSA, we
set a cumulative energy retention rate of $\alpha = \beta = 0.95$ across all fine-tuning
tasks. For comparison, we configure LoRA~\citep{hu2022lora} and PiSSA~\citep{meng2024pissa} with a rank of 32, a commonly chosen value that balances performance and the number of trainable parameters.

For SVFit~\citep{sun2024svfit}, we adhere to the recommended configuration of
the original paper, using a rank of 768 for all models. Similarly, for SVFT~\citep{lingam2024svft}, we adopt the best-performing settings. We use the AdamW optimizer~\citep{Loshchilov2017DecoupledWD} with a fixed batch size of 32 and a linear scheduler incorporating a warm-up ratio of 0.08. For further details on hyperparameter settings, see \cref{tab:img-cls-hyper-large} and \cref{tab:img-cls-hyper-base}.

\begin{table*}
    [!t]
    \centering
    \resizebox{\linewidth}{!}{%
    \begin{tabular}{lcccccccc}
        \toprule Dataset  & CIFAR-100 & EuroSAT & RESISC45 & StanfordCars & FGVC-Aircraft & DTD  & CIFAR-10 & OxfordPets \\
        \midrule           % Optimizer & \multicolumn{8}{c}{AdamW} \\
        % Warmup Ratio & \multicolumn{8}{c}{0.08} \\
        % LR Schedule & \multicolumn{8}{c}{Linear} \\
        % Epochs & \multicolumn{8}{c}{15} \\
        % Batch Size & \multicolumn{8}{c}{32} \\
        Attention Dropout & 0.1       & 0.1     & 0.1      & 0            & 0.1           & 0.1  & 0.1      & 0          \\
        Weight Decay      & 1e-3      & 1e-3    & 1e-3     & 0.01         & 1e-3          & 1e-3 & 1e-3     & 0.01       \\
        LR                & 1e-4      & 8e-5    & 1e-3     & 1e-3         & 2e-3          & 3e-4 & 1e-4     & 1e-4       \\
        LR (Classifier)   & 1e-3      & 5e-4    & 3e-3     & 3e-3         & 6e-3          & 1e-3 & 1e-3     & 1e-3       \\
        \bottomrule
    \end{tabular}}
    \vspace{-.1in}
    \caption{Hyperparameter settings for ViT-Large~\citep{dosovitskiy2021image} across
    different datasets for image classification experiments. LR: Learning Rate.}
    \label{tab:img-cls-hyper-large}
    \vspace{-.05in}
\end{table*}

\begin{table*}
    [!t]
    \centering
    \resizebox{\linewidth}{!}{%
    \begin{tabular}{lcccccccc}
        \toprule Dataset           & CIFAR-100 & EuroSAT & RESISC45 & StanfordCars & FGVC-Aircraft & DTD  & CIFAR-10 & OxfordPets \\
        \midrule Attention Dropout & 0.1       & 0.1     & 0.1      & 0            & 0.1           & 0.1  & 0.1      & 0          \\
        Weight Decay               & 1e-3      & 1e-3    & 1e-3     & 0.01         & 1e-3          & 1e-3 & 1e-3     & 0.01       \\
        LR                         & 2e-4      & 1e-4    & 2e-3     & 2e-3         & 1e-3          & 2e-4 & 2e-4     & 2e-4       \\
        LR (Classifier)            & 1e-3      & 5e-4    & 5e-3     & 5e-3         & 5e-3          & 1e-3 & 1e-3     & 1e-3       \\
        \bottomrule
    \end{tabular}}
    \vspace{-.1in}
    \caption{Hyperparameter settings for ViT-Base~\citep{dosovitskiy2021image} across
    different datasets for image classification experiments. LR: Learning Rate.}
    \label{tab:img-cls-hyper-base}
    \vspace{-.05in}
\end{table*}

\begin{table*}
    [!t]
    \centering
    \begin{tabular}{lcccccccc}
        \toprule Dataset   & MNLI & SST-2 & MRPC & CoLA & QNLI & QQP  & RTE  & STS-B \\
        \midrule            % Optimizer & \multicolumn{8}{c}{AdamW} \\
        % Warmup Ratio & \multicolumn{8}{c}{0.08} \\
        % LR Schedule & \multicolumn{8}{c}{Linear} \\
        Max Seq. Len.      & 256  & 128   & 320  & 64   & 512  & 320  & 320  & 128   \\
        Epochs             & 8    & 16    & 30   & 15   & 10   & 8    & 20   & 15    \\
        Batch Size         & 16   & 32    & 16   & 16   & 32   & 16   & 16   & 32    \\
        Classifier Dropout & 0.15 & 0     & 0    & 0.1  & 0.1  & 0.2  & 0.2  & 0.2   \\
        Weight Decay       & 0    & 0.01  & 0.01 & 0    & 0.01 & 0.01 & 0.01 & 0.1   \\
        LR                 & 1e-4 & 1e-4  & 2e-4 & 1e-4 & 1e-4 & 1e-4 & 2e-4 & 2e-4  \\
        LR(Classifier)     & 3e-4 & 3e-4  & 4e-4 & 3e-4 & 3e-4 & 3e-4 & 4e-4 & 4e-4  \\
        \bottomrule
    \end{tabular}
    \vspace{-.1in}
    \caption{Hyperparameter settings for DeBERTa-V3-Base~\citep{he2023debertav3}
    across different datasets for NLU experiments. LR: Learning Rate.}
    \vspace{-.10in}
    \label{tab:nlu-hyper}
\end{table*}

% \begin{table}[!h]
% \centering
% \setlength{\tabcolsep}{8mm}
% \resizebox{.9\linewidth}{!}{%
% \begin{tabular}{l|c|c}
% \hline
% \multicolumn{3}{c}{ViT-Base \citep{dosovitskiy2021image}} \\
% \hline
% Datasets & Classifier LR & Other LR \\
% \hline
% CIFAR-100 \citep{krizhevsky2009learning} & 2e-4 & 1e-4 \\
% EuroSAT \citep{helber2019eurosat} & 1e-4 & 5e-5 \\
% RESISC45 \citep{cheng2017remote} & 4e-3 & 2e-3 \\
% StanfordCars \citep{krause20133d} & 4e-3 & 2e-3 \\
% FGVC Aircraft \citep{maji2013fine} & 8e-3 & 4e-3 \\
% DTD \citep{cimpoi2014describing}   & 8e-4 & 4e-4 \\
% CIFAR-10 \citep{krizhevsky2009learning}  & 2e-4 & 1e-4 \\
% OxfordPets \citep{parkhi2012cats}  & 1e-4 & 5e-5 \\
% \hline
% \multicolumn{3}{c}{ViT-Large \citep{dosovitskiy2021image}} \\
% \hline
% Datasets & Classifier LR & Other LR \\
% \hline
% CIFAR-100 \citep{krizhevsky2009learning} & 1e-4 & 5e-5 \\
% EuroSAT \citep{helber2019eurosat}  & 5e-5 & 2e-5 \\
% RESISC45 \citep{cheng2017remote} & 1e-3 & 5e-4 \\
% StanfordCars \citep{krause20133d} & 4e-3 & 2e-3 \\
% FGVC Aircraft \citep{maji2013fine} & 5e-3 & 2e-3 \\
% DTD \citep{cimpoi2014describing}   & 6e-4 & 3e-4 \\
% CIFAR-10 \citep{krizhevsky2009learning}  & 1e-4 & 5e-5 \\
% OxfordPets \citep{parkhi2012cats}  & 6e-4 & 3e-4 \\
% \hline
% \end{tabular}
% }
% \vspace{-.1in}
% \caption{Learning rate settings for ViT-Base and ViT-Large across different datasets for image classification. LR: Learning Rate.}
% \label{tab:lr}
% \end{table}

\para{Settings for Text-to-Image Generation.} For the subject-driven text-to-image
generation task, We use Stable Diffusion v2-1-base~\citep{rombach2022high} as the pre-trained model and apply DreamBooth~\citep{ruiz2023dreambooth} for subject-driven text-to-image fine-tuning. We follow the setup of DreamBooth~\citep{ruiz2023dreambooth} to
evaluate CERSA’s fine-tuning. This ensures that the method captured subject-specific
details while preserving pre-trained knowledge. We compare CERSA with full-parameter
DreamBooth~\citep{ruiz2023dreambooth} and LoRA~\citep{hu2022lora}, evaluating image quality and textual alignment. 

In our implementation, we replace all linear layers in the UNet~\citep{ronneberger2015u}
and the attention modules of the CLIP~\citep{radford2021learning} text encoder with
CERSA. The cumulative energy retention rate is set to $\alpha=\beta=0.95$. For LoRA~\citep{hu2022lora},
we insert the adapters into the same layers with a rank of 32. For full-parameter
fine-tuning, we made all these layers trainable. The VAE(variational autoencoder)
module remains frozen in all methods. We use the AdamW~\citep{Loshchilov2017DecoupledWD}
optimizer and a constant scheduler. To ensure fairness in the inference stage,
we use identical random seeds, inference steps, and guidance scales across all
methods, preventing variations due to different parameter settings.

\para{Settings for NLU Experiments.} For the NLU experiments, we fine-tune the Q,
K, and V matrices in DeBERTa-v3-base~\citep{he2023debertav3}. The adapter rank for
LoRA~\citep{hu2022lora} and PiSSA~\citep{meng2024pissa} is set to 32. SVFit~\citep{sun2024svfit}
and SVFT~\citep{lingam2024svft} use the same settings as in the image classification
experiments. To ensure fairness, we follow SVFT's~\citep{lingam2024svft} max
sequence length settings. We use the AdamW~\citep{Loshchilov2017DecoupledWD}
optimizer and employ a linear scheduler with a warm-up ratio of 0.08. For detailed
hyperparameters, see \cref{tab:nlu-hyper}.

\begin{table*}
    [!t]
    \centering
    \resizebox{\linewidth}{!}{%
    \begin{tabular}{lcccccc}
        \toprule Methods                   & \makecell{Trainable\\ Parameter (M)} & \makecell{Trainable\\ Ratio (\%)} & \makecell{Weights\\ Memory (MB)} & \makecell{Optimizer\\ State Memory (MB)} & \makecell{Gradient\\ Memory (MB)} & \makecell{Total\\ Memory (MB)} \\
        \midrule FT                        & 303.3                                & 100                               & \textbf{1157.7}                  & 2314.4                                   & 1157.7                            & 4629.8                         \\
        \hline
        LoRA~\citep{hu2022lora}($r=8$)     & 0.8                                  & 0.3                               & 1161.9                           & 9.4                                      & 4.7                               & 1175.9                         \\
        LoRA~\citep{hu2022lora}($r=32$)    & 3.2                                  & 1.0                               & 1175.4                           & 36.4                                     & 18.2                              & 1229.9                         \\
        \hline
        SVFit~\citep{sun2024svfit}         & 0.04                                 & 0.02                              & 1349.8                           & 1.1                                      & 0.6                               & 1351.5                         \\
        SVFT~\citep{lingam2024svft}        & 0.12                                 & 0.06                              & 1350.9                           & 3.3                                      & 1.7                               & 1355.8                         \\
        \hline
        $\text{CERSA}_{\alpha=\beta=0.95}$ & 10.5                                 & 3.6                               & 1111.4                           & 80.5                                     & 40.3                              & 1232.2                         \\
        $\text{CERSA}_{\alpha=\beta=0.92}$ & 8.1                                  & 2.8                               & 1069.0                           & 59.0                                     & 29.5                              & \textbf{1157.5}                \\
        $\text{CERSA}_{\alpha=\beta=0.9}$  & 6.3                                  & 2.3                               & 1048.6                           & 49.3                                     & 24.6                              & \textbf{1122.5}                \\
        $\text{CERSA}_{\alpha=\beta=0.85}$ & 4.3                                  & 1.6                               & 1011.4                           & 33.3                                     & 16.7                              & \textbf{1061.4}                \\
        $\text{CERSA}_{\alpha=\beta=0.8}$  & 3.0                                  & 1.2                               & 985.2                            & 23.6                                     & 11.8                              & \textbf{1020.6}                \\
        \hline
    \end{tabular}}
    \vspace{-.1in}
    \caption{Memory consumption comparison across various methods with different
    settings. }
    \label{tab:memory_comparison}
    \vspace{-.15in}
\end{table*}

\section{Performance on Memory Consumption}
\cref{tab:memory_comparison} compares the parameter and memory efficiency of various
fine-tuning methods. We exclude activation and dataset-related memory usage, as they
remain largely independent of the fine-tuning approach. Thus, total memory refers
to the sum of the weight size, the gradient size, and the size of the optimizer parameter.
Besides, in \cref{tab:memory_comparison}, we report the number of trainable
parameters in millions (M).

Full-parameter fine-tuning (FT) updates all model parameters (303.3 M trainable parameters),
resulting in a substantial total memory consumption of 4629.8 MB. This high
memory demand makes FT impractical for resource-constrained environments.

LoRA~\citep{hu2022lora}, with ranks of 8 and 32, significantly reduces the number
of trainable parameters to 0.8 M and 3.2 M, respectively. However, its total
memory consumption remains considerable -- 1175.9 MB for rank=8 and 1229.9 MB for
rank=32 -- exceeding the memory footprint of the pre-trained weights due to
additional optimizer state and gradient storage. Similarly, SVFit~\citep{sun2024svfit}
achieves high parameter efficiency with only 0.04 M of trainable parameters yet still
requires 1351.5 MB of total memory, primarily due to the storage overhead of
full singular vector matrices.

The proposed CERSA method provides a flexible solution for memory and parameter-efficient fine-tuning by adjusting the cumulative energy retention rate, enabling different levels of efficiency based on memory constraints. For example, with a relatively ample memory budget, setting the retention rate to $\alpha=\beta=0.95$ yields
better performance. At $\alpha=\beta=0.92$, CERSA maintains a memory footprint
equivalent to the pre-trained weights during fine-tuning. When reduced to $\alpha
=\beta=0.8$, it retains 3.0 M trainable parameters comparable to LoRA~\citep{hu2022lora}
(rank=32) while significantly lowering total memory consumption to 1020.6 MB (while
LoRA~\citep{hu2022lora} uses up to 1229.9 MB).

Although not as parameter-efficient as SVFit~\citep{sun2024svfit} and SVFT~\citep{lingam2024svft},
CERSA excels in overall memory efficiency, even with more trainable parameters. This
makes it particularly advantageous for fine-tuning large-scale models in memory-constrained
environments. Additionally, its adjustable cumulative energy retention rate allows
for customized trade-offs, making CERSA a versatile solution that outperforms
other PEFT methods in total memory consumption while maintaining competitive
performance.

\section{Performance on Speed}
CERSA decomposes the pre-trained weight matrix into three components: $\bm{U}_{p}$,
$\bm{S}_{p}$, and $\bm{V}_{p}$. For simplicity, we assume that CERSA is configured
with $\alpha = \beta = 0.95$. Compared to the original weight matrix $\bm{W}$, this
decomposition introduces more granular matrix computations. However, since the size
of the matrices involved in computation is significantly reduced, the overall
computational cost is also reduced. To evaluate the actual impact on fine-tuning,
we design experiments to measure \textit{training throughput} and \textit{training
time}.

To eliminate the impact of dataset pre-processing and batch size on computation
time and throughput, we fix the batch size at 32 and the number of epochs at 15.
Fine-tuning is performed on ViT-Large~\citep{dosovitskiy2021image} across full-parameter
fine-tuning, LoRA~\citep{hu2022lora}, and CERSA.

\begin{figure}[!t]
    \centering
    \includegraphics[width=0.8\linewidth]{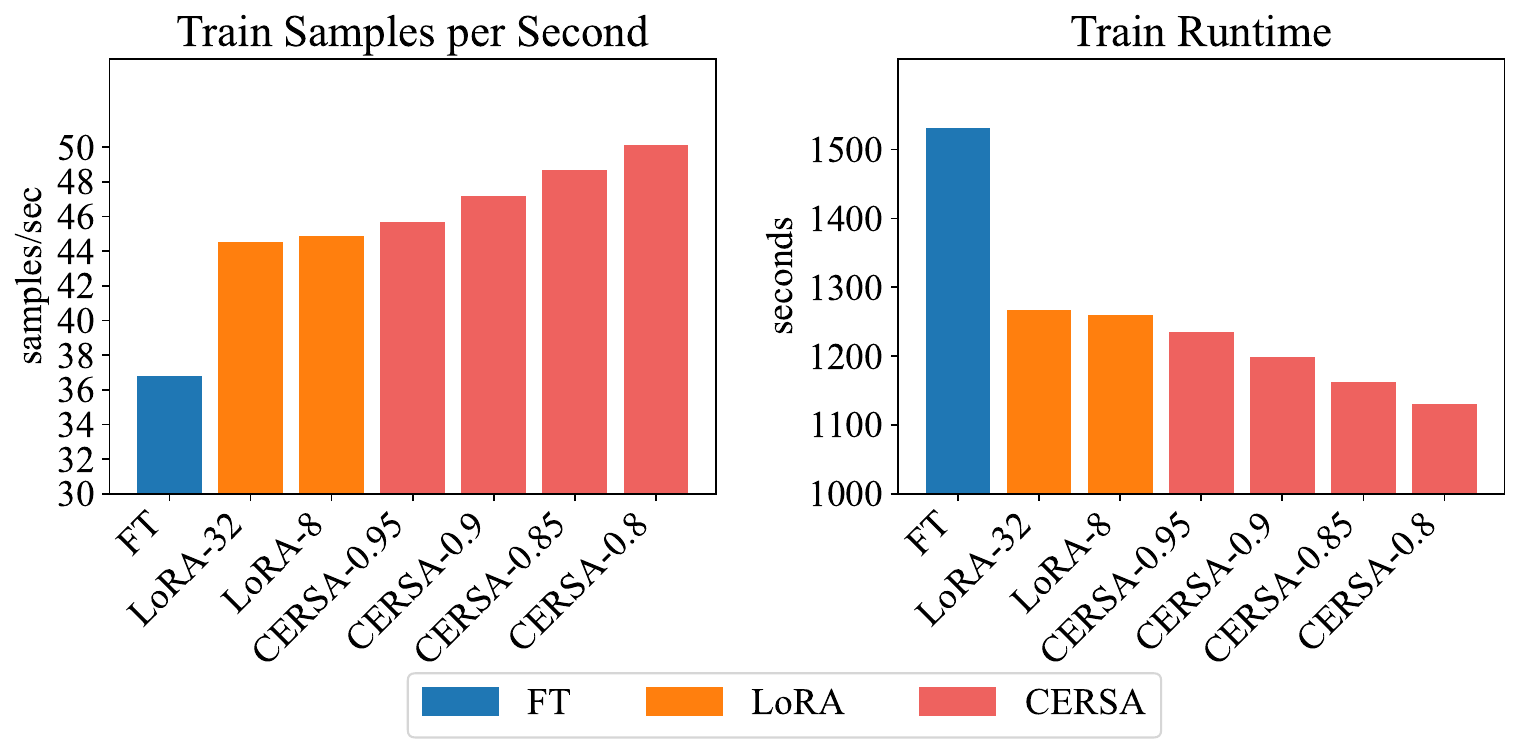}
    \vspace{-.10in}
    \caption{Training throughput and training time of fine-tuning ViT-Large~\citep{dosovitskiy2021image}
    on the DTD~\citep{cimpoi2014describing} dataset under various configurations.}
    \label{fig:throughoutput}
    \vspace{-.20in}
\end{figure}

Experimental results show that our method achieves a comparable or superior
training efficiency to LoRA while significantly outperforming FT in terms of
speed. As shown in \cref{fig:throughoutput}, LoRA ($r$=32, $r$=8) improves throughput
by about 30\% over FT. CERSA, across all cumulative energy retention rates \{0.95,
0.9, 0.85, 0.8\}, slightly exceeds LoRA’s efficiency, demonstrating that the cumulative
energy retention decomposition of the weight matrix effectively reduces computational
complexity while preserving model capacity. Despite introducing more granular
matrix multiplications, the significantly reduced dimensionality effectively lowers
the computational cost. As a result, CERSA matches or even surpasses LoRA in fine-tuning
speed.

\begin{figure*}[!t]
    \centering
    \includegraphics[width=\linewidth]{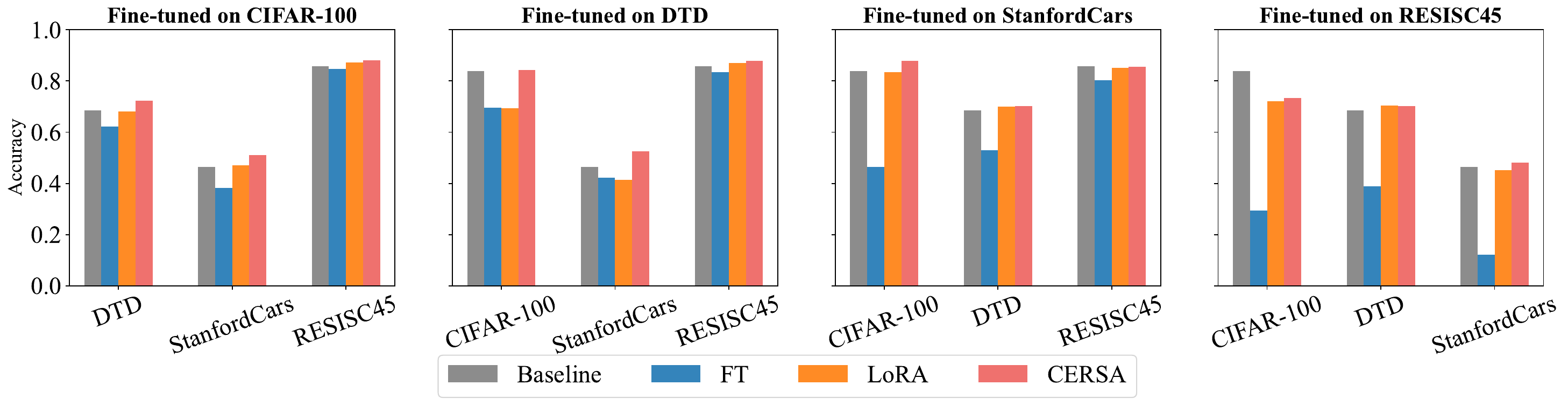}
    \vspace{-.1in}
    \caption{Out-of-distribution evaluation on various tasks.}
    \label{fig:forget}
    \vspace{-.2in}
\end{figure*}

\section{Performance on Out-of-Distribution Tasks}

During full-parameter fine-tuning, the model gradually forgets core features
from the pre-training data as its parameter space shifts significantly. In contrast,
CERSA restricts updates to $\bm{S}_{p}$, adjusting only the most critical feature
subspace while ensuring that the principal subspace remains unaffected by less important
dimensions. This preserves essential pre-trained knowledge.

Out-of-distribution(OOD) performance is a crucial indicator of knowledge retention,
as previously studied in \citep{hendrycks2017a} and \citep{kumar2022finetuning}.
\cref{fig:forget} shows the OOD performance of models fine-tuned on each of the four
datasets (CIFAR-100~\citep{krizhevsky2009learning}, DTD~\citep{cimpoi2014describing},
StanfordCars~\citep{krause20133d}, and RESISC45~\citep{cheng2017remote}), with accuracy
evaluated on the remaining three datasets. We compare FT, LoRA~\citep{hu2022lora},
and our proposed CERSA method. The gray bars indicate the model's original
performance before fine-tuning, serving as a reference for relative performance degradation.

Across all fine-tuning settings, CERSA consistently achieves superior OOD performance compared to FT and LoRA~\citep{hu2022lora}.
Specifically, when fine-tuned on CIFAR-100~\citep{krizhevsky2009learning} (leftmost
subplot), CERSA maintains a higher average OOD accuracy than LoRA~\citep{hu2022lora}
and FT, suggesting that it better preserves pre-trained knowledge for handling novel
tasks such as DTD~\citep{cimpoi2014describing} and StanfordCars~\citep{krause20133d},
or even leverages knowledge from CIFAR-100~\citep{krizhevsky2009learning}. A
similar trend is observed in the DTD~\citep{cimpoi2014describing} fine-tuning scenario
(second subplot), where CERSA demonstrates stronger retention of pre-trained
features, particularly on CIFAR-100~\citep{krizhevsky2009learning}.

\begin{table}[!t]
    \centering
    \resizebox{0.6\linewidth}{!}{%
    \begin{tabular}{lccc}
        \toprule Method                  & FT     & LoRA~\citep{hu2022lora} & CERSA  \\
        \midrule Average Forgetting Rate & 17.8\% & 2.3\%                   & -1.5\% \\
        \bottomrule
    \end{tabular}
    }
    \vspace{-.1in}
    \caption{Average forgetting rate of FT, LoRA~\citep{hu2022lora} and CERSA on
    the four datasets (CIFAR-100~\citep{krizhevsky2009learning}, DTD~\citep{cimpoi2014describing},
    StanfordCars~\citep{krause20133d}, and RESISC45~\citep{cheng2017remote})}
    \label{tab:average_forgetting_rate}
    \vspace{-.15in}
\end{table}

In our experiments, fine-tuning is performed on one dataset while accuracy is
evaluated on the remaining three. The average forgetting rate is defined as the ratio
of the average accuracy drop in the three out-of-distribution tasks compared to the
baseline accuracy of the pre-trained model after fine-tuning on a specific task.
As shown in \cref{tab:average_forgetting_rate}, these results highlight CERSA’s ability
to mitigate catastrophic forgetting by retaining key representations learned during
pre-training, thereby preserving higher accuracy on tasks not directly involved
in fine-tuning.

\begin{table*}
    [t]
    \centering
    \renewcommand{\arraystretch}{1.2}
    \begin{tabular}{c|cccc}
        \hline
        Dataset & CIFAR-100       & EuroSAT         & RESISC45        & StanfordCars    \\
        \hline
        Q       & 99.69\%/99.65\% & 99.94\%/99.94\% & 99.81\%/99.79\% & 99.95\%/99.94\% \\
        K       & 99.76\%/99.74\% & 99.96\%/99.96\% & 99.86\%/99.85\% & 99.94\%/99.94\% \\
        V       & 99.58\%/99.58\% & 99.91\%/99.91\% & 99.79\%/99.79\% & 99.92\%/99.92\% \\
        \hline
    \end{tabular}
    \vspace{0.5cm}
    \begin{tabular}{c|cccc}
        \hline
        Dataset & FGVC-Aircraft   & DTD             & CIFAR-10        & OxfordPets      \\
        \hline
        Q       & 99.76\%/99.73\% & 99.91\%/99.90\% & 99.68\%/99.63\% & 99.96\%/99.94\% \\
        K       & 99.78\%/99.76\% & 99.94\%/99.94\% & 99.74\%/99.72\% & 99.94\%/99.93\% \\
        V       & 99.69\%/99.70\% & 99.85\%/99.86\% & 99.55\%/99.55\% & 99.89\%/99.90\% \\
        \hline
    \end{tabular}
    \vspace{-.23in}
    \caption{Principal subspace similarity between the Q, K, and V matrices of
    ViT-Large~\citep{dosovitskiy2021image} pre-trained on ImageNet-21K~\citep{deng2009imagenet}
    and the fine-tuned weights on various downstream image classification tasks.
    }
    \vspace{-.15in}
    \label{tab:subspace_similarity}
\end{table*}

\section{Subspace Similarity Analysis}

% \begin{proof}[Proof of \cref{the:subspace}]
% Since $\text{Span}(\bm{U}) \subseteq \text{Span}(\bm{Q})$, there exists a matrix $\bm{C} \in \mathbb{R}^{k \times k_d}$ such that
% $\bm{U} = \bm{Q} \bm{C}$.
% Similarly, there exists a matrix $\bm{D} \in \mathbb{R}^{k \times k_d}$ such that: $ \bm{V} = \bm{Q}'\bm{D} $.
% Substituting the above equations into the SVD decomposition of $\Delta \bm{W}$, we have
% $
%     \bm{M} = (\bm{Q} \bm{C}) \bm{\Sigma} (\bm{D}^T \bm{Q}'^T)
% $.
% Let $\bm{S} = \bm{C} \bm{\Sigma} \bm{D}^T$, then we have
% $\bm{M} = \bm{Q} \bm{S} \bm{Q}'^T$.

\begin{figure*}[!t]
    \centering
    \includegraphics[width=1\linewidth]{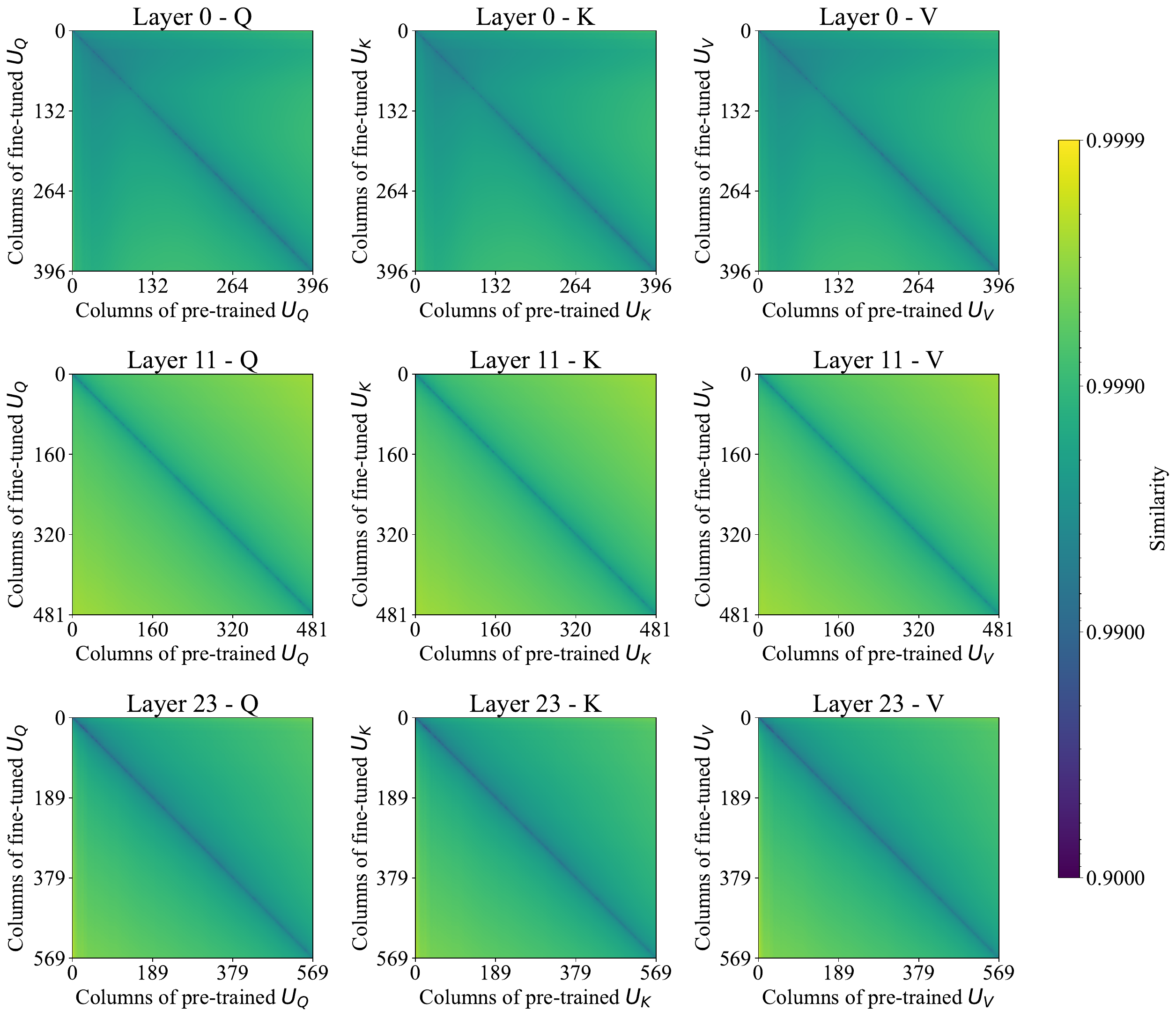}
    \vspace{-.2in}
    \caption{The similarity between the principal output subspace $\bm{U}_{p}$
    of the pre-trained and fine-tuned weights for the Q, K, and V matrices in
    layers 0, 11, and 23 of ViT-Large~\citep{dosovitskiy2021image}. The x-axis
    represents the subspace spanned by the top-$i$ singular vectors of the pre-trained
    weights, while the y-axis represents the subspace spanned by the top-$j$ singular
    vectors of the fine-tuned weights. }
    \label{fig:heatmap-U}
    \vspace{-.1in}
\end{figure*}

\begin{figure*}[!t]
    \centering
    \includegraphics[width=1\linewidth]{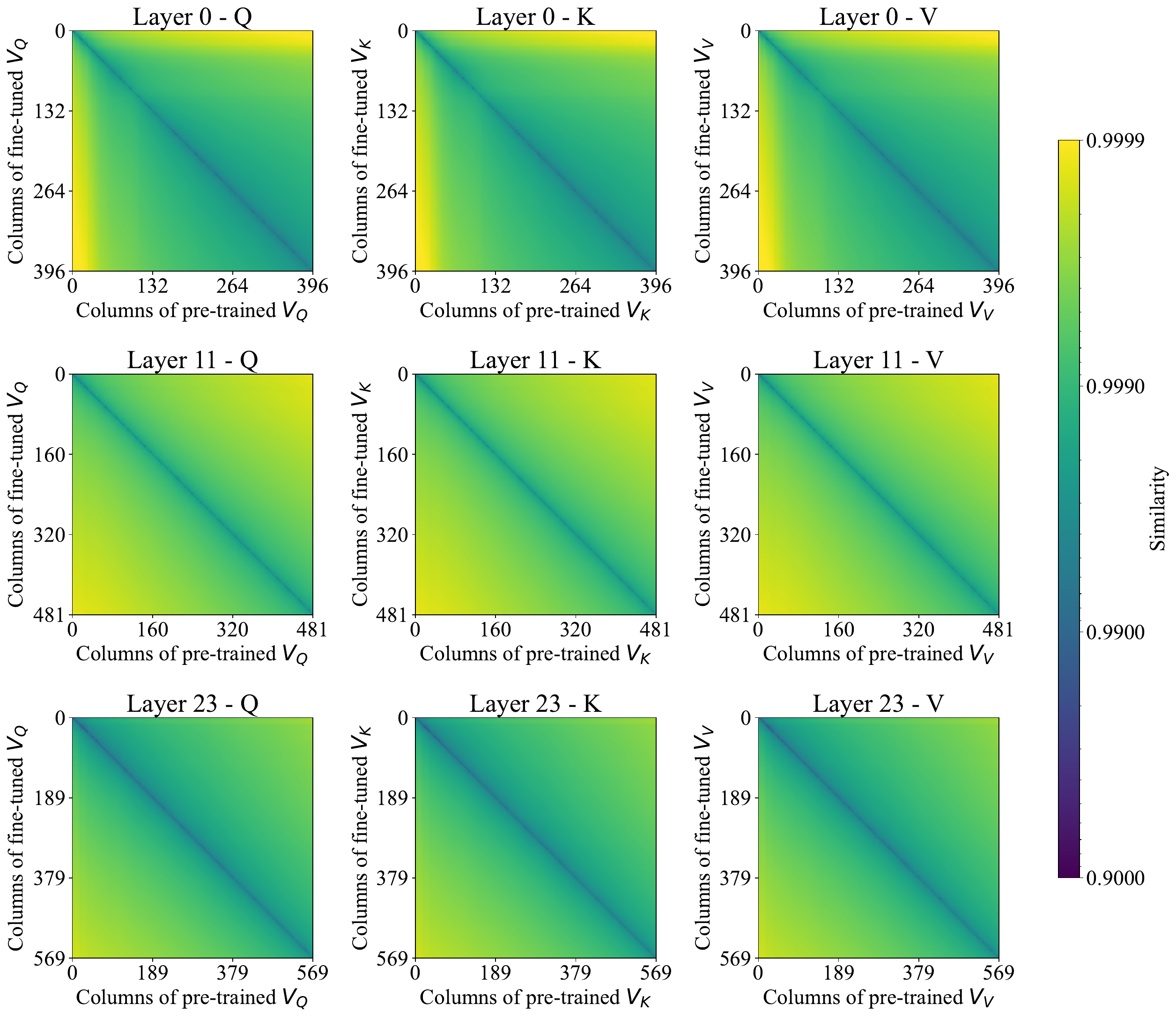}
    \vspace{-.2in}
    \caption{The similarity between the principal input subspace $\bm{V}_{p}$ of
    the pre-trained and fine-tuned weights for the Q, K, and V matrices in
    layers 0, 11, and 23 of ViT-Large~\citep{dosovitskiy2021image}. The x-axis
    represents the subspace spanned by the top-$i$ singular vectors of the pre-trained
    weights, while the y-axis represents the subspace spanned by the top-$j$ singular
    vectors of the fine-tuned weights.}
    \vspace{-.1in}
    \label{fig:heatmap-V}
\end{figure*}

\subsection{Subspace Similarity Between Pre-trained and Fine-tuned Models}
Our theoretical analysis assumes that CERSA can approximate full-parameter fine-tuning
based on the premise that the principal subspace of $\bm{W}^{\prime}$ after full
fine-tuning on a downstream task remains highly similar to that of the pre-trained
weights. This assumption suggests that fine-tuning primarily refines the
existing subspace rather than significantly altering its structure. To empirically
validate this assumption, we measure the subspace similarity between
$\bm{W}^{\prime}$ and the pre-trained weights $\bm{W}$ of the ViT-Large~\citep{dosovitskiy2021image}
model, initially trained on ImageNet-21K~\citep{deng2009imagenet}, across eight
different downstream image classification datasets.

To quantitatively assess this similarity, we employ the \textit{Grassmann
subspace similarity}~\citep{hu2022lora}, a metric that effectively captures the
alignment between the principal output subspaces of the pre-trained and fine-tuned
weights. Formally, the Grassmann similarity is defined as follows:

\begin{equation}
    \label{eq:grassmann}\psi(\bm{U}_{A}^{i}, \bm{U}_{B}^{j}) = \frac{\|\bm{U}_{A}^{i\top}\bm{U}_{B}^{j}\|_{F}^{2}}{\min\{i,
    j\}}, \quad \psi \in [0,1]
\end{equation}
where $\bm{U}_{A}^{i}$ and $\bm{U}_{B}^{j}$ stands for top-$i$ columns of the
$\bm{U}$ matrix from SVD decomposition of matrix $\bm{A}$ and top-$j$ columns of
the $\bm{U}$ matrix from SVD decomposition of matrix $\bm{B}$ respectively. $\psi$
ranges from 0 (completely disjoint subspaces) to 1 (identical subspaces).

Similarly, we extend this analysis to the input subspace represented by $\bm{V}$,
applying the same similarity computation. The results presented in
\cref{tab:subspace_similarity} are computed using a subspace that retains 95\% of
the cumulative energy. The first value represents the similarity of the output
subspace $\bm{U}$, while the second corresponds to the input subspace $\bm{V}$.

Analyzing \cref{tab:subspace_similarity}, we observe that across all downstream
tasks, the Grassmann subspace similarity between the fine-tuned and pre-trained subspaces
consistently exceeds 99.5\% for both $\bm{U}$ and $\bm{V}$ across all three
attention matrices -- Q, K, and V. This strong evidence suggests that fine-tuning
minimally affects the principal subspace of the pre-trained weights, thereby validating
our assumption.

To further examine the stability of the Grassmann subspace similarity under varying
top-$k$ selections, we conducted experiments on the 0th, 11th, and 23rd layers
of the ViT-Large~\citep{dosovitskiy2021image} model before and after fine-tuning
on CIFAR-10~\citep{krizhevsky2009learning}. Specifically, we extract the Q, K,
and V matrices from these layers, perform SVD to obtain the principal subspaces $\bm
{U}$ and $\bm{V}$ for both the pre-trained and fine-tuned models, and measure
subspace similarity by selecting top-$i$ and top-$j$ singular vectors. The results
are visualized in a heat map.

Since all similarity values fall within the range of 0.9 to 0.9999, we applied a
logarithmic transformation to the color scale for better visualization. As depicted
in \cref{fig:heatmap-U} and \cref{fig:heatmap-V}, the subspaces of the fine-tuned
and pre-trained weight matrices exhibit consistently high similarity across all
choices of top-$i$ and top-$j$, with values ranging from 0.9 to 0.9999. This
confirms that the observed subspace similarity is not confined to specific top-$k$
selections but persists across all choices of singular vectors.

Regardless of the truncation level, the fine-tuned and pre-trained weight matrices
maintain exceptionally high Grassmann subspace similarity. This finding further
substantiates our hypothesis that fine-tuning does not significantly alter the
principal subspace of the pre-trained model, reinforcing the fundamental assumption
underlying our method.

\subsection{Proofs}

\begin{proof}
    Let $\bm{M}\in \mathbb{R}^{m\times n}$ be a matrix of rank $k$. By the
    singular value decomposition (SVD), we can write $\bm{M}= \bm{U}\bm{\Sigma}\bm
    {V}^{T},$ where $\bm{U}\in \mathbb{R}^{m \times k}$ and $\bm{V}\in \mathbb{R}
    ^{n \times k}$ are matrices with orthonormal columns, and
    $\bm{\Sigma}\in \mathbb{R}^{k \times k}$ is a diagonal matrix with positive
    diagonal entries (the singular values).

    Since there exists an orthonormal basis
    $\bm{Q}= \{\bm{e}_{1}, \bm{e}_{2}, \dots, \bm{e}_{k}\}$ such that
    $\operatorname{Span}(\bm{U}) = \operatorname{Span}(\bm{Q}),$ both $\bm{U}$
    and $\bm{Q}$ form orthonormal bases for the same $k$-dimensional subspace. Therefore,
    there exists an orthogonal matrix $\bm{R}\in \mathbb{R}^{k \times k}$ such
    that
    \[
        \bm{U}= \bm{Q}\bm{R}.
    \]
    Similarly, because $\text{Span}(\bm{V}) = \text{Span}(\bm{Q'})$, there
    exists an orthogonal matrix $\bm{R}'\in \mathbb{R}^{k\times k}$ satisfying
    \[
        \bm{V}= \bm{Q'}\bm{R}'.
    \]
    Substituting these expressions into the singular value decomposition of
    $\bm{M}$, we obtain
    \[
        \bm{M}= \bm{U}\bm{\Sigma}\bm{V}^{T}= (\bm{Q}\bm{R})\bm{\Sigma}(\bm{Q'}\bm
        {R}')^{T}= \bm{Q}\bm{R}\bm{\Sigma}\bm{R}'^{T}\bm{Q'}^{T}.
    \]
    Defining $\bm{S}= \bm{R}\bm{\Sigma}\bm{R}'^{T}$, we have
    \[
        \bm{M}= \bm{Q}\bm{S}\bm{Q'}^{T},
    \]
    where $\bm{S}\in \mathbb{R}^{k\times k}$. This completes the proof.
\end{proof}

    % \clearpage

    % \bibliography{iclr2026_conference}
    % \bibliographystyle{iclr2026_conference}
\end{document}